
\RequirePackage{amsthm}
\documentclass[pdflatex,sn-mathphys,Numbered]{sn-jnl}% Math and Physical Sciences Reference Style
%%\documentclass[sn-aps]{sn-jnl}% American Physical Society (APS) Reference Style
%%\documentclass[sn-vancouver,Numbered]{sn-jnl}% Vancouver Reference Style
%%\documentclass[sn-apa]{sn-jnl}% APA Reference Style 
%%\documentclass[sn-chicago]{sn-jnl}% Chicago-based Humanities Reference Style
%%\documentclass[default]{sn-jnl}% Default
%%\documentclass[default,iicol]{sn-jnl}% Default with double column layout

%%%% Standard Packages
%%<additional latex packages if required can be included here>

\usepackage{graphicx}%

\graphicspath{{images/}}
\usepackage{multirow}%
\usepackage{amsmath,amssymb,amsfonts}%
\usepackage{mathrsfs}%
\usepackage[title]{appendix}%
\usepackage{xcolor}%
\usepackage{textcomp}%
\usepackage{manyfoot}%
\usepackage{booktabs}%
\usepackage{algorithm}%
\usepackage{algorithmicx}%
\usepackage{algpseudocode}%
\usepackage{listings}%
\usepackage{hyperref}
\hypersetup{
	colorlinks = true,  % Enable colored links
	linkcolor = blue,  % Color of internal links (e.g., Table of Contents)
	citecolor = blue,  % Color of citation links
	urlcolor = black    % Color of external links (e.g., DOI links)
}

%%%%
 
%%%%%=============================================================================%%%%
%%%%  Remarks: This template is provided to aid authors with the preparation
%%%%  of original research articles intended for submission to journals published 
%%%%  by Springer Nature. The guidance has been prepared in partnership with 
%%%%  production teams to conform to Springer Nature technical requirements. 
%%%%  Editorial and presentation requirements differ among journal portfolios and 
%%%%  research disciplines. You may find sections in this template are irrelevant 
%%%%  to your work and are empowered to omit any such section if allowed by the 
%%%%  journal you intend to submit to. The submission guidelines and policies 
%%%%  of the journal take precedence. A detailed User Manual is available in the 
%%%%  template package for technical guidance.
%%%%%=============================================================================%%%%

%\jyear{2021}%

%% as per the requirement new theorem styles can be included as shown below
\theoremstyle{thmstyleone}%
\newtheorem{theorem}{Theorem}%  meant for continuous numbers
%%\newtheorem{theorem}{Theorem}[section]% meant for sectionwise numbers
%% optional argument [theorem] produces theorem numbering sequence instead of independent numbers for Proposition
\newtheorem{proposition}{Proposition}% 

\theoremstyle{thmstyletwo}%

\theoremstyle{thmstylethree}%
\newtheorem{lemma}{Lemma}
\raggedbottom
%%\unnumbered% uncomment this for unnumbered level heads
\DeclareMathOperator*{\argmax}{arg\,max}

\begin{document}

\title[Article Title]{ICM Ensemble with Novel Betting Functions for Concept Drift}

%%=============================================================%%
%% Prefix	-> \pfx{Dr}
%% GivenName	-> \fnm{Joergen W.}
%% Particle	-> \spfx{van der} -> surname prefix
%% FamilyName	-> \sur{Ploeg}
%% Suffix	-> \sfx{IV}
%% NatureName	-> \tanm{Poet Laureate} -> Title after name
%% Degrees	-> \dgr{MSc, PhD}
%% \author*[1,2]{\pfx{Dr} \fnm{Joergen W.} \spfx{van der} \sur{Ploeg} \sfx{IV} \tanm{Poet Laureate} 
%%                 \dgr{MSc, PhD}}\email{iauthor@gmail.com}
%%=============================================================%%

\author*[1]{\fnm{Charalambos} \sur{Eliades}}\email{pamboseliades@hotmail.com}

\author[1]{\fnm{Harris } \sur{Papadopoulos}}\email{h.papadopoulos@frederick.ac.cy}
\equalcont{These authors contributed equally to this work.}

\affil*[1]{\orgdiv{Computational Intelligence (COIN) Research Lab}, \orgname{Frederick University}, \orgaddress{\street{7, Y. Frederickou Str., Pallouriotissa}, \city{Nicosia}, \postcode{1036},\country{Cyprus}}}

%%==================================%%
%% sample for unstructured abstract %%
%%==================================%%

\abstract{This study builds upon our previous work by introducing a refined Inductive Conformal Martingale (ICM) approach for addressing Concept Drift (CD). Specifically, we enhance our previously proposed CAUTIOUS betting function to incorporate multiple density estimators for improving detection ability. We also combine this betting function with two base estimators that have not been previously utilized within the ICM framework: the Interpolated Histogram and Nearest Neighbor Density Estimators. We assess these extensions using both a single ICM and an ensemble of ICMs. For the latter, we conduct a comprehensive experimental investigation into the influence of the ensemble size on prediction accuracy and the number of available predictions. Our experimental results on four benchmark datasets demonstrate that the proposed approach surpasses our previous methodology in terms of performance while matching or in many cases exceeding that of three contemporary state-of-the-art techniques.}

\keywords{Concept, Drift, Conformal, Martingales}
\noindent\textit{This manuscript is currently under review at the Machine Learning journal.}

%%\pacs[JEL Classification]{D8, H51}

%%\pacs[MSC Classification]{35A01, 65L10, 65L12, 65L20, 65L70}

\maketitle

\section{Introduction}\label{sect:introduction}

CD corresponds to a change in the underlying data generating mechanism, resulting in  loss of classification performance. Using ICM on the calculated p-values of each classifier in the ensemble, we detect violations of the Exchangeability Assumption (EA) for a pre-specified significance level and retrain the corresponding classifier of the ensemble to regain performance.

Formally a CD   can be defined as follows:  Given a data stream     $S=\{(x_0,y_0), (x_1,y_1),\dots,\}$, where $x_i$ is a feature vector and $y_i$ a label; if  $S$ can be divided into two sets $S_{0:t} = \{(x_0, y_0), \dots, (x_t,y_t)\}$ and $S_{t+1:} = \{(x_{t+1}, y_{t+1}), \dots\}$, such that $S_{0:t}$ and $S_{t+1: \dots}$ are generated by two different distributions, then a CD occurred at timestamp $t+1$. This can be extended to any number of CDs with any number of different distributions. 

CD can be produced from three sources. Specifically, the joint probability density function of a pair $(x,y)$ at time $t$ is denoted by $f_{X,Y,t} = f_{Y|X,t} \cdot f_{X,t}$, and similarly, $f_{X,Y,t+1} = f_{Y|X,t+1} \cdot f_{X,t+1}$ represents these functions at time $(t+1)$.  A change in the joint distribution of (X,Y) can be the result of one of the following: (a) $f_{X|Y,t}=f_{Y|X,t+1}$ and   $f_{Y,t} \neq  f_{Y,t+1}$, in this case we have a change in the label's distribution while the decision boundaries remain unchanged, this is also known as  virtual drift \citep{survey2} (b) $f_{Y|X,t} \neq f_{Y|X,t+1}$ and   $f_{Y,t}=f_{Y,t+1}$ here the decision boundaries change and lead to decrease in accuracy, also referred to as actual drift \citep{survey2}  and (c) $f_{Y|X,t} \neq f_{Y|X,t+1}$ and   $f_{Y,t}\neq f_{Y,t+1}$ which is a combination of the two previously mentioned sources. 

CD types can be classified into four categories: (a) sudden drift, where the data generating mechanism changes instantly (b) gradual drift, where the data distribution is replaced with a new one over time (i.e. each example is generated by a mixture of distributions but over time the impact of the initial distribution disappears),   (c) incremental where a new data generating mechanism incrementally replaces the existing mechanism (i.e. each example is a weighted average of the two mechanisms but over time the impact of the initial mechanism disappears)  d) reoccurring drift when a previously seen data distribution reappears \citep{survey},\citep{survey2}.

CD detection algorithms are categorized based on the kind of the statistics they apply \citep{survey}. The first category is the \textquoteleft Error rate-based algorithms\textquoteright, which monitor increases or decreases in the online error rate, if these changes are statistically significant, a drift alarm is raised. The second and biggest category is the \textquoteleft Data Distribution-based\textquoteright; here the algorithms quantify the dissimilarity between the historical data and the new data. A distance function is used to measure the dissimilarity between the two distributions and a statistical hypothesis test with respect to a significance level determines if a CD occurs. The last category \textquoteleft Multiple Hypothesis Test\textquoteright, applies similar techniques to the ones mentioned above but employs multiple hypothesis tests to determine the presence of a CD. These can be divided into two groups: parallel hypothesis tests and hierarchical multiple hypothesis tests; for more information refer to \citep{survey}. In this study our CD detection algorithm belongs to the second category. Importantly, to the best of our knowledge, none of the existing detection algorithms in this or any other category offers valid probabilistic guarantees, except for methods that rely on Conformal Martingales (CM) which belong to Conformal prediction framework. 
    
Conformal prediction offers a  framework for uncertainty estimation in machine learning, providing predictions with  confidence levels. Introduced in \cite{vovk:alrw}, conformal prediction leverages the exchangeability of data points to yield statistically valid predictions under a specified significance level. This approach is particularly valuable in dynamic environments characterized by CD, as it allows for adaptive model recalibration based on the reliability of predictions. Here we  employ  conformal prediction techniques, particularly through the use of ICM \cite{Volk:icm}. A comprehensive resource list, including videos, tutorials, and open-source libraries on conformal prediction,  can be found in the GitHub repository ``awesome-conformal-prediction" is an excellent starting point \citep{manokhin_2022_6467205}.

This paper extends our previous work. In \cite{pmlr-v179-eliades22a}, we introduced a novel betting function integrated with the ICM for rapid detection of CD points, enabling timely model retraining and accuracy restoration. Furthering this approach in \cite{pmlr-v204-eliades23a}, we enhanced change point detection using an ensemble of 10 diverse classifiers, each coupled with ICM.  This methodology trains different models on distinct sets of instances and applies ICM independently to the p-value sequences of each model. Consequently this minimizes instances with unavailable labels, even during false alarms. Additionally, if a model lags in identifying a change, the rest of the ensemble mitigates this delay, maintaining overall accuracy. In this study, we extend the ICM ensemble approach by introducing new  betting functions to improve accuracy. 

Overall, in this study, we present three contributions in the field of CD detection, building upon our previous findings:

1. Extension of the Cautious Betting Function: Following our initial work outlined in \cite{pmlr-v179-eliades22a}, we have advanced the cautious betting function by employing multiple probability density estimators. This enhancement allows for a more nuanced analysis of data streams, improving the detection accuracy of concept drift.

2. Advancement in Density Estimation Methods: Whereas in \cite{pmlr-v179-eliades22a}, we utilized a simple histogram density estimator, our current research extends the methodology by employing for the first time within the ICM framework a smoothed histogram and  a nearest neighbors density estimator. 

3. Ensemble of ICMs:   We use an ICM ensemble following our previous work in  \cite{pmlr-v179-eliades22a}; additionally we provide a detailed analysis of the ensemble performance and number of available predictions with different combinations of classifiers.

The rest of the paper starts with an overview of related work on addressing CD in Section \ref{sec:rw}. Section \ref{sec:cm} gives a brief overview of the ideas behind ICM. Section  \ref{sec:NCMs} describes the proposed approach. While Section \ref{sec:res} presents the experimental setting and performance measures used in our evaluation and reports the obtained experimental results. Finally, Section \ref{sec:concl} gives our conclusions and plans for future work. 

\section{Related Work}
\label{sec:rw}

This section offers an overview of the literature on CD and CM. Subsection \ref{sub:Concept_Drift} delves into the related work on CD, while Subsection \ref{sub:CM} explores the contributions and existing research in the field of CM.

\subsection{Concept Drift}\label{sub:Concept_Drift}

This subsection examines various research contributions relevant to CD. Given the vast amount of research on this topic we will present only the most prominent works related to the method we follow. Our discussion begins with an exploration of two comprehensive surveys in the field.

An extensive overview with over 130 high-quality publications is presented in \cite{survey}, highlighting key developments in the field that contribute to research related to CD. This survey  lists and discusses 10 popular synthetic datasets and 14 publicly available benchmark datasets, which are crucial for evaluating the efficacy of learning algorithms in environments where CD is prevalent.

Another  survey in \cite{survey2} delves into works specifically addressing CD. This survey presents an exhaustive study of both synthetic and real datasets that are publicly accessible and can be employed for benchmarking CD challenges. It also thoroughly reviews the various types of CD and the array of approaches devised to manage such changes in data streams.

As previously mentioned, CD detection algorithms can be  categorized into three groups based on the statistical methods they employ, as detailed in \cite{survey}. The first category encompasses `Error rate-based algorithms', which primarily focus on fluctuations in error rates as indicators of drift. The second and most extensive category is `Data Distribution-based' algorithms, which analyze shifts in data distributions. The final category is `Multiple Hypothesis Test' algorithms, which use a series of statistical tests to detect drift. In the following paragraphs, we present notable works corresponding to each of these categories.

\subsubsection{Error rate based methods}
A significant contribution to error rate-based methods for CD addressing is the Accuracy Weighted Ensemble (AWE) method proposed in \cite{awe}. This ensemble-based approach assigns weights to base classifiers based on their classification error, enhancing decision accuracy.

Another innovative approach is the Accuracy and Growth rate updated Ensemble (AGE) method suggested in \cite{ENS_AGE}. AGE is a hybrid technique that merges the strengths of single classifier and ensemble methods, utilizing the geometric mean of weights and growth rates of models. This design offers improved adaptability to various types of CDs, and experimental results have generally shown AGE's superior accuracy over its competitors.

The authors of this work \cite{dwm:2007} demonstrate the Dynamic Weighted Majority (DWM) method, an innovative ensemble approach tailored for addressing CD. The essence of DWM lies in its dynamic mechanism that involves creating, weighting, or even removing online learners depending on their performance. To demonstrate the effectiveness of this method in adapting to evolving data streams, the authors integrated it with Naive Bayes, resulting in a variant known as DWM-NB. This implementation showcases the method's capacity to adjust seamlessly to changes in data stream characteristics.

The Learn++.NSE algorithm, an ensemble-based approach presented in \cite{comp:sea}, is designed for incremental learning in nonstationary environments. It generates a new classifier for each data batch and combines these classifiers using a time-adjusted error-based dynamic weighted majority voting. Learn++.NSE is versatile, accommodating various drift types such as constant or variable rate of drift, concept class changes, and cyclical drifts.

The authors of \cite{comp:elec_sea} introduced a novel approach to handling data streams using a particle filter-based learning method, known as PF-LR, for evolving logistic regression models. This innovative method operates by selectively choosing the most effective step particles in maximizing classification accuracy for each current data batch. Demonstrating its robustness, PF-LR has delivered promising results, showing notable performance even with relatively small batch sizes. Its efficacy was further validated through extensive testing on both synthetic and real datasets, where it successfully outperformed various other state-of-the-art stream mining algorithms.

The Adaptive Random Forest (ARF) algorithm \cite{Gomes2017AdaptiveRF} is specifically tailored for evolving data streams. It demonstrates that ARF handles CDs effectively without requiring complex adjustments, using a resampling method and adaptive operators. Its performance in both parallel and serial implementations has proven efficient and accurate, making it a strong alternative in data stream scenarios.

Lastly, the authors of \cite{Classification_comparisons} proposed the Diversified Dynamic Class Weighted (DDCW) ensemble model. This model integrates a dynamic class weighting scheme and focuses on ensemble diversity. Tests on various real-world and synthetic datasets demonstrated its predictive capability and computational efficiency, comparing favourably against other adaptive ensemble methods.

In the landscape of error rate-based methods for CD detection, various approaches showcase unique strategies to adapt to evolving data streams dynamically. They offer practical solutions for various types of CD, demonstrating robust performance in synthetic and real-world datasets.
However, it is crucial to note that while these methods are adaptable and often practical, they generally differ from traditional statistical approaches grounded in probabilistic guarantees. Typically, error rate-based methods involve setting thresholds on drift detection metrics and may, to some extent, rely on assumptions about the distribution of error rates. Unlike entirely distribution-free methods, this aspect can introduce certain limitations in their adaptability and applicability.
\subsubsection{Data distribution Based}

The EDTC algorithm, introduced by the authors of \cite{comp:sea_stagger}, revolutionizes concept drift detection with an incremental ensemble methodology built on Ensemble Decision Trees. It uniquely incorporates random feature selection and a dual-threshold detection system, based on Hoeffding Bounds, to effectively differentiate concept drifts from noise. Extensive evaluation on both synthetic and real streaming datasets confirms EDTC's superior performance over traditional online algorithms, showcasing its efficacy in managing noisy data environments with exceptional precision.

The research presented in \cite{Reg_dist} introduces the Local Drift Degree (LDD) concept. LDD is a measurement designed to quantify regional density discrepancies between distinct sample sets, thereby identifying areas where density has increased, decreased, or remained stable. The authors also developed two algorithms, LDD-DIS and LDD-DSDA, which leverage LDD to manage CD effectively. LDD-DIS is adept at continuously monitoring regional density changes to pinpoint drifted instances, while LDD-DSDA employs LDD for both drift identification and model adaptation. Their experimental results on three benchmark datasets indicate enhanced accuracy over other methods.

Another noteworthy contribution is from the authors of \cite{comp:stagger_elec}, who proposed an incremental least squares density difference (LSDD) change detection method. This method assesses distributional differences using two non-overlapping windows, and its efficacy was validated on six synthetic datasets and one real-world dataset.

In our study presented in \cite{pmlr-v152-eliades21a}, we explored the integration of  ICM with a histogram betting function. This novel combination is specifically designed to detect violations of the EA and, as a result, identify CD in data streams. Notably, our approach is distribution-free, distinguishing it from other methods that often presuppose a specific distribution in their drift detection metrics. This aspect of our methodology aligns with addressing the open question mentioned in \cite{survey} regarding reliance on assumed distributions.

Continuing this theme, our subsequent work \cite{pmlr-v204-eliades23a} employs an ICM ensemble learning approach to tackle CD in data-stream classification effectively. This system comprises 10 classifiers, each trained on distinct data sizes and operating within a majority voting framework for making predictions. By analyzing unique p-value sequences generated by each classifier through ICM, our method efficiently detects change points, triggering retraining of the affected classifier. Tested on four benchmark datasets it demonstrates accuracy that matches or surpasses that of two state-of-the-art algorithms.

\subsubsection{Multiple Hypothesis Tests}

The Just-in-Time (JIT) approach, implemented by the authors of \cite{jit}, stands out for its effectiveness in managing recurring concepts. JIT operates by identifying the specific concept to which incoming examples belong and maintains a pool of models, each representing different concepts. A critical insight from this approach is that leveraging information from past experiences significantly enhances the ability to handle CD.

The authors of \cite{hierarchical_mult} introduced Hierarchical Change-Detection Tests (HCDTs). This novel approach employs a two-layer hierarchical architecture to address CD. The first layer, focuses on detection, utilizing any low-complexity drift detection method to identify potential changes in the data-generating mechanism quickly. Once a change is detected, the second layer, dedicated to validation, is activated to conduct a more in-depth analysis of the recently acquired data, thereby helping to minimize false alarms. A distinctive feature of HCDTs is their ability to automatically reconfigure after confirming and validating a change, making them adept at recognizing deviations from the newly established data generation state. For the validation layer, the authors propose two strategies: one is to estimate the distribution of the test statistics through likelihood maximization, and the other is to adapt existing hypothesis tests, such as the Kolmogorov-Smirnov test or the Cramer-Von Mises test, to fit within the HCDTs framework.

It should be noted that while robust and powerful, multiple hypothesis tests for CD detection often rely on test statistics that depend on specific distribution assumptions. This dependency can introduce potential limitations to their drift detection precision if these assumptions are not met.

\subsection{Conformal Martingales}\label{sub:CM}

In this section, we explore the contributions of various researchers towards testing the EA using CM. A significant challenge in this domain has been the assumption of specific distributions for test statistics, a limitation  which can be addressed by the use of CM. As detailed in \cite{vovk:alrw}, CM tests the EA without relying on any  assumptions regarding the test statistics.

One notable contribution \cite{Vovk:exch} in this field involves a method for online exchangeability testing based on Conformal Prediction  and CM. This method  computes a sequence of p-values using conformal prediction in an online setting, where each new example's p-value is determined using both new and previously seen examples. Following this, a Betting Function  is applied to each p-value, and the product of these BF outputs forms the Martingale's value. When the Martingale's value \( M \) becomes sufficiently large, the EA can be rejected at a significance level of \( 1/M \). Consequently in a time series if at some point the EA is violated we have a change point. 

Further developing the concept introduced in \cite{Vovk:exch}, another study \cite{ho:ex} introduces a  CM that utilizes the   mixture betting function. The authors of this work  formulated two types of martingale tests: one based on the values of the martingale itself, and the other on the differences observed in successive martingale values. 

Extending these principles, another study \cite{fedorova:2012plugin} applied them to test the exchangeability of data in two datasets, USPS and Statlog Satellite. The approach involves online testing, where data is processed sequentially, and the CM value is computed to reject the EA. They used a kernel density estimator as a BF, showing superior performance to the simple mixture BF.

In addition, the authors of \cite{Volk:icm} introduced an Inductive version of CM for detecting changes in time series. This study uses the initial observations of the time sequence to train the underlying model, and all nonconformity scores are calculated via this model. The authors experimented with several BFs and found that the pre-computed kernel BF yields the most efficient results, evidenced by the lowest mean delay in their tests on synthetic datasets. Their findings are comparable with other methods like CUSUM, Shiryaev-Roberts, and Posterior Probability statistics.

In   \cite{DBLP:journals/corr/abs-1207-1379},  the authors use CM to identify concept changes in data streams by examining the EA. Their  approach, grounded on Doob's Maximal Inequality, establishes a robust framework for hypothesis testing within time-varying data environments. They rigorously tested their methodology on both synthetic and real-world datasets, showcasing its applicability and effectiveness in detecting concept changes.

A novel real-time martingale-based approach is proposed in \cite{ho2019} using Gaussian Process Regression (GPR) to predict and detect anomalous flight behaviour as data arrives sequentially. The authors implemented multiple CM tests to reduce the number of false alarms and the detection delay time, again utilizing the mixture BF for Martingale calculation.

To conclude, inspired by the studies reviewed in this section, our methodology uses ICM to address the CD problem due to its distribution free nature.

\section{Inductive Conformal Martingales}
\label{sec:cm}
In this section we describe the basic concepts of ICM and how our nonconformity scores and p-values are calculated.

\subsection{Data Exchangeability}\label{subsec:ex}
Let $(Z_1,Z_2,\dots)$ be an infinite sequence of random variables. Then the joint probability distribution $\mathbb{P}(Z_1,Z_2,\dots,Z_N)$  is exchangeable if it is invariant under any permutation of these random variables. The joint distribution of the infinite sequence $(Z_1,Z_2,\dots)$ is exchangeable if the marginal distribution
of $(Z_1,Z_2,\dots,Z_N)$ is exchangeable for every $N\in{\mathbb{N}}$. Testing if the data is exchangeable is equivalent to testing if it is independent and identically distributed (i.i.d.); this is an outcome of de Finetti's theorem \citep{Schervish}: any exchangeable distribution on the data is a mixture of distributions under which the data is i.i.d.  

\subsection{Exchangeability Martingale}\label{subsec:exm}
A test exchangeability Martingale is a sequence of random variables $(S_1,S_2,S_3,\dots)$  being equal to or greater than zero that keep the conditional expectation $\mathbb{E}(S_{n+1}|S_1,\dots,S_n)=S_n$. 

To give an idea of how a martingale works, consider a fair game where a gambler with infinite wealth follows a strategy based on the distribution of the events in the game.  The gain acquired by the gambler can be described by the value of a Martingale, specifically Ville's inequality \citep{piaggio1939} indicates that the probability of having high profit $(C)$ is small,  $\mathbb{P}\{\exists n: S_n\geq  C \}\leq 1/C$.

According to Ville's  inequality \citep{piaggio1939}  for the case of the EA, a large final value of the Martingale suggests rejection of the assumption with a significance level equal to the inverse of the Martingale value, i.e. a  Martingale value of 10 or 100 rejects the hypothesis of exchangeability at the $10\%$ or $1\%$ significance level, respectively.   

\subsection{Calculating Non-conformity Scores and  P-values  } \label{subsec:pcncm}

Let $\{z_1,z_2, \dots\}$ be a sequence of examples, where $z_i=(x_i,y_i)$
with $x_i$  an object given in the form of an input vector,
and $y_i$  the label of the corresponding input vector.
The CM approach generates a sequence of p-values corresponding to the given sequence of examples and then calculates the martingale as a function of these p-values. As mentioned in Section \ref{sect:introduction}, this work employs CM's computationally efficient inductive version. ICM uses the first $k$ examples   $\{z_1,z_2, \dots,z_k\}$ in the sequence to train a classification algorithm, which it then uses to generate the p-values for the next examples. Consequently, it starts checking for violations of the EA from example \(z_{k+1}\), focusing on the sequence \(\{z_{k+1}, z_{k+2}, \dots\}\).

Our aim is to examine how strange or unusual a new example $z_j\in \{z_{k+1}, z_{k+2}, \dots\} $ is compared to the training examples. To make this possible, we define a function $A(z_i, \{z_1, \dots,\ z_k\})$, where $i \in \{k+1\ \dots\}$, called a nonconformity measure (NCM) that assigns a numerical value $\alpha_i$ to each example $z_i$, called nonconformity score (NCS). The NCM is based on the trained underlying classification algorithm. The bigger the NCS value of an example, the less it conforms  with $\{z_1, \dots, z_k\}$ according to the underlying algorithm.

For every new  example $z_j$ we generate  the sequence  $H_j=\{\alpha_{k+1},\alpha_{k+2},\dots,\alpha_{j-1},\alpha_{j}\}$ to calculate its p-value. Note that the NCSs in $H_j$ are calculated with the underlying algorithm  trained on $\{z_1,z_2, \dots,z_k\}$. 
Given the sequence $H_j$  we can calculate the corresponding p-value ($p_j$)  of the new example $z_j$ with the function:
\begin{equation}\label{eq:pvalue}        
	p_j = \frac{|\{ \alpha_{i}  \in H_j | \alpha_{i} > \alpha_{j}\}|+U_j\cdot |\{ \alpha_{i}  \in H_j | \alpha_{i} = \alpha_{j}\}| }{j-k},
\end{equation}
where $\alpha_{j}$ is the NCS of the new example and   $\alpha_{i}$ is the NCS of the $i^{th}$ element in the example sequence set and $U_j$ is a random number from the uniform distribution (0,1). For more information, refer to \cite{Vovk:exch}.

\subsection{Constructing Exchangeability Martingales}\label{subsec:cm}
An ICM is an exchangeability test Martingale (see Subsection \ref{subsec:exm}), which is calculated as a function of p-values such as the ones described in Subsection \ref{subsec:pcncm}.   

Given a sequence of p-values $(p_1,p_2,\dots)$ the martingale $S_n$ is calculated as:  
\begin{equation}\label{eq:Sn}        
	S_n=\prod_{i=1}^{n}f_i(p_i)
\end{equation} 
where $f_i(p_i)=f_i(p_i|p_1,p_2,\dots,p_{i-1})$  is the betting function \citep{Vovk:exch}.

The betting function should satisfy the constraint: $\int_{0}^{1}f_i(p)dp=1, f_i(p)\geq0 $ and also the $S_n$ must keep the conditional expectation:  $\mathbb{E}(S_{n+1}|S_0,S_1,\dots,S_n)=S_n$.

The integral $\int_{0}^{1}f_i(p)dp$ equals to 1 because $f_i(p)$ is the p-values $(p_1,p_2,\dots,p_{i-1})$ density estimator. We also need to prove that $\mathbb{E}(S_{n+1}|S_0,S_1,\dots,S_n)=S_n$ under any exchangeable distribution.  
\begin{proposition}\label{prob:1}
If $\int_{0}^{1}f_i(p)dp=1$  then under any exchangeable distribution it holds:	\[
	\mathbb{E}(S_{n+1} \mid S_0, S_1, \ldots, S_n) = S_n
	\]
\end{proposition}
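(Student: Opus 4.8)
The plan is to exploit the multiplicative structure of $S_n$ in Equation~\eqref{eq:Sn} together with the validity property of the conformal $p$-values: under an exchangeable distribution the (smoothed) $p$-values produced by Equation~\eqref{eq:pvalue} are i.i.d.\ uniform on $(0,1)$, so in particular $p_{n+1}$ is independent of $p_1,\dots,p_n$ and uniformly distributed. First I would write, directly from Equation~\eqref{eq:Sn}, $S_{n+1}=S_n\cdot f_{n+1}(p_{n+1})$, and note that both $S_n=\prod_{i=1}^{n}f_i(p_i)$ and the betting function $f_{n+1}=f_{n+1}(\,\cdot\mid p_1,\dots,p_n)$ are measurable with respect to $\mathcal{F}_n:=\sigma(p_1,\dots,p_n)$, since each $f_i$ depends only on $p_1,\dots,p_{i-1}$.

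Next, I would condition on the finer $\sigma$-algebra $\mathcal{F}_n$. Pulling out the $\mathcal{F}_n$-measurable factor $S_n$ and using that $p_{n+1}$ is independent of $\mathcal{F}_n$ with a uniform law on $(0,1)$, the ``freezing'' property of conditional expectation gives
\[
\mathbb{E}\!\left(S_{n+1}\mid \mathcal{F}_n\right)=S_n\,\mathbb{E}\!\left(f_{n+1}(p_{n+1})\mid \mathcal{F}_n\right)=S_n\int_{0}^{1} f_{n+1}(p)\,dp=S_n,
\]
where the last equality is exactly the hypothesis $\int_0^1 f_i(p)\,dp=1$. Then, since $\mathcal{G}_n:=\sigma(S_0,S_1,\dots,S_n)\subseteq\mathcal{F}_n$ and $S_n$ is $\mathcal{G}_n$-measurable, the tower property yields
\[
\mathbb{E}\!\left(S_{n+1}\mid S_0,\dots,S_n\right)=\mathbb{E}\!\left(\mathbb{E}(S_{n+1}\mid \mathcal{F}_n)\mid \mathcal{G}_n\right)=\mathbb{E}\!\left(S_n\mid \mathcal{G}_n\right)=S_n,
\]
which is the assertion of the proposition.

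\textbf{Main obstacle.} The only nontrivial ingredient is the probabilistic input used above, namely that under any exchangeable distribution the $p$-values of Equation~\eqref{eq:pvalue} are i.i.d.\ uniform (equivalently, that $p_{n+1}$ conditioned on $p_1,\dots,p_n$ is uniform on $(0,1)$); this is where exchangeability actually enters, and it is the standard validity result for smoothed conformal $p$-values \citep{Vovk:exch,vovk:alrw}. Everything else is bookkeeping with conditional expectations and filtrations. Two points deserve care: (i) the tie-breaking variables $U_j$ in Equation~\eqref{eq:pvalue} must be drawn independently of the data and of each other, since the \emph{exact} uniformity of the $p_j$ relies on this randomization; and (ii) in the inductive setting all NCSs are computed from the model frozen on $z_1,\dots,z_k$, so exchangeability of $z_{k+1},z_{k+2},\dots$ transfers to exchangeability of $\alpha_{k+1},\alpha_{k+2},\dots$, from which the uniformity of the $p$-values follows by the usual rank argument. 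I would therefore either cite this validity property directly or include a short lemma establishing it before invoking it in the computation above.
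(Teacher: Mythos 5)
Your proof is correct and follows essentially the same route as the paper's: writing $S_{n+1}=S_n\cdot f_{n+1}(p_{n+1})$, pulling out the factor measurable with respect to the past, and reducing to $\int_0^1 f_{n+1}(p)\,dp=1$. In fact you are more careful than the paper, which silently writes the conditional expectation as $\int_0^1\prod_i f_i(p_i)\,f_{n+1}(p)\,dp$ without stating the key input you make explicit --- that under exchangeability the smoothed conformal $p$-values are i.i.d.\ uniform, so $p_{n+1}$ is conditionally uniform given the past --- nor the tower-property step from $\sigma(p_1,\dots,p_n)$ down to $\sigma(S_0,\dots,S_n)$.
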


\begin{proof}[Proof of Proposition \ref{prob:1}]
	The integral \(\int_{0}^{1} f_i(p) \, dp\) equals 1 
	
	We will now prove that the conditional expectation is preserved under any exchangeable distribution:
\begin{equation}
	\begin{aligned}
		\mathbb{E}(S_{n+1} \mid S_0, S_1, \ldots, S_n) &= \int_{0}^{1} \prod_{i=1}^{n} f_i(p_i) \cdot f_{n+1}(p) \, dp \\
		&= \prod_{i=1}^{n} f_i(p_i) \cdot \int_{0}^{1} f_{n+1}(p) \, dp \\
		&= \prod_{i=1}^{n} f_i(p_i) = S_n
	\end{aligned}
\end{equation}
\end{proof}
Using equation (\ref{eq:Sn}), it is easy to show that $S_n=S_{n-1}\cdot f_n(p_n)$, which allows us to update the martingale online. 
Let us say that the value of $S_n$ equals M, then Ville's inequality \citep{piaggio1939} suggests that we can reject the EA with a significance level equal to $1/M$.  

Note that we can calculate equation (\ref{eq:Sn})  in the logarithmic scale to deal with precision issues.
\section{Proposed Approach}\label{sec:NCMs}

This section describes the proposed methodology. Initially, we introduce the CAUTIOUS betting function, which is employed by   ICM to test the EA. We present two theorems that support the design of the CAUTIOUS betting function. Subsequently, we describe the extension of the CAUTIOUS betting function to incorporate multiple density estimators. We then detail the two density estimators utilized in our study. To the best of our knowledge, this is the first time those two  Density Estimators have been used within an ICM framework. We then demonstrate how Concept Drift (CD) is detected using ICM and finally explain how the ICM ensemble manages CD.

\subsection{The Cautious Betting Function}\label{subsec:cau}
In this section, we describe the Cautious Betting Function defined as $h_n$ and discuss its extension when it is built on multiple density estimators.
 
\subsubsection{Motivation and Definition}\label{subsec:single_cau}
An issue of the  CM and ICM is that they might need a significant amount of time to recover from a value very close to zero \citep{Volk:icm}. This betting function avoids betting (i.e.\ $h_n=1$) when insufficient evidence is available to reject the EA, thus keeping the value of $S_n$ from getting close to zero and reducing the time needed to detect a CD. Theorem \ref{thm:1} shows that under a uniform distribution of p-values, any betting function diverging from constant unity leads to \(S_{\infty}\) identically equaling zero. Here, \(S_{\infty}\) denotes the limiting value of the product of sequential bets over an infinite timeline. Theorem \ref{thm:2} extends this, showing that for any sequence of betting functions converging uniformly to a function other than constant unity, \(S_{\infty}\) will also converge to zero. These theorems support the design of the Cautious Betting Function for its strategic avoidance of unnecessary bets, ensuring faster CD detection.

\begin{theorem}\label{thm:1}
When the distribution of the p-values is uniform then for any betting function $f$ other than $f=1$, it follows that $S_{\infty} \equiv 0$
\end{theorem}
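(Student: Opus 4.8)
The plan is to take logarithms and reduce the statement to the strong law of large numbers combined with a strict form of Jensen's inequality. Writing $S_n = \prod_{i=1}^{n} f(p_i)$, we have $\log S_n = \sum_{i=1}^{n} \log f(p_i)$. Under the exchangeability assumption the (smoothed) conformal p-values $p_1, p_2, \dots$ are i.i.d.\ uniform on $[0,1]$, so the terms $\log f(p_i)$ are i.i.d.\ with common expectation $\mu = \int_0^1 \log f(p)\, dp$, interpreted as an element of $[-\infty, \infty)$.

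First I would verify that $\mu$ is well defined in $[-\infty, 0]$. From the elementary bound $\log x \le x - 1$ for $x > 0$ we get $(\log f(p))^{+} \le (f(p) - 1)^{+} \le f(p)$, and since $\int_0^1 f(p)\, dp = 1 < \infty$ the positive part of $\log f(p_1)$ is integrable; hence $\mu$ exists in $[-\infty, \infty)$, and integrating $\log f(p) \le f(p) - 1$ over $[0,1]$ already gives $\mu \le 0$.

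Next comes the crux of the argument: Jensen's inequality for the strictly concave function $\log$ gives $\mu = \int_0^1 \log f(p)\, dp \le \log\!\left(\int_0^1 f(p)\, dp\right) = \log 1 = 0$, with equality if and only if $f$ is Lebesgue-a.e.\ equal to a constant, which (given $\int_0^1 f = 1$) forces $f \equiv 1$ a.e. Since by hypothesis $f$ is a betting function other than the constant $1$, the inequality is strict, so $\mu < 0$ (possibly $\mu = -\infty$, e.g.\ when $f$ vanishes on a set of positive measure). Finally, applying the strong law of large numbers in its generalized form valid for i.i.d.\ summands with integrable positive part and possibly infinite mean, $\frac{1}{n}\log S_n = \frac{1}{n}\sum_{i=1}^{n}\log f(p_i) \to \mu < 0$ almost surely, so $\log S_n \to -\infty$ and therefore $S_n \to 0$ almost surely; that is, $S_{\infty} \equiv 0$.

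The delicate points, rather than any heavy computation, are: (i) reading the hypothesis ``$f$ other than $f=1$'' in the correct measure-theoretic sense (differing from $1$ on a set of positive measure), so that the equality case of Jensen applies; (ii) covering the degenerate regime $\mu = -\infty$, which needs the extended SLLN rather than Kolmogorov's classical statement; and (iii) recording that the hypothesis on the p-values supplies the independence the SLLN requires. An alternative, should one wish to avoid independence, is to observe that $S_n$ is a nonnegative martingale and hence converges a.s.\ by the martingale convergence theorem, and then argue separately that the limit cannot be positive --- but the i.i.d.\ logarithmic route above is the most direct.
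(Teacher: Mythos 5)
Your proposal is correct and follows the same basic route as the paper: take logarithms, apply the law of large numbers to $\frac{1}{n}\sum\ln f(p_i)$, and use Jensen's inequality to show the limiting mean is strictly negative. There are two points of difference worth recording. First, at the crucial step the paper asserts $\mathbb{E}[\ln f(P)]\leq\ln(\mathbb{E}[f(P)])<0$ on the grounds that ``$\mathbb{E}[f(P)]$ is in the range $(0,1)$''; this cannot be the right justification, since a betting function satisfies $\int_0^1 f(p)\,dp=1$, so $\ln(\mathbb{E}[f(P)])=0$ exactly. Your version supplies the correct argument: the inequality $\mu\leq 0$ is Jensen, and strictness comes from the equality case of Jensen for the strictly concave logarithm, which holds precisely because $f$ differs from the constant $1$ on a set of positive measure. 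You also correctly flag the integrability issue (via $\log x\leq x-1$) and the possibility $\mu=-\infty$, neither of which the paper addresses. Second, the final passage to the limit differs: you invoke the strong law to get $\log S_n\to-\infty$ almost surely directly, whereas the paper derives convergence in probability to $0$ and then upgrades it by combining with the martingale convergence theorem and uniqueness of limits. Both closings are valid; yours is more direct, while the paper's martingale route is the one you mention as an alternative. Net: same skeleton, but your write-up repairs the one genuinely flawed justification in the paper's proof.
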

 
\begin{proof}[Proof of Theorem \ref{thm:1}]
	
	Let \( P \sim U(0,1) \), and consider \( S_n = \prod_{i=1}^{n}f(p_i) \) with a betting function \( f \neq 1 \). We aim to show that \( \lim_{n \to \infty} S_n = 0 \). 
	
	First, we express \( S_n \) in terms of logarithms:
	\[
	\ln(S_n) = \ln\left(\prod_{i=1}^{n}f(p_i)\right) = \sum_{i=1}^{n}\ln(f(p_i))
	\]
	
	Applying the Law of Large Numbers and Jensen's Inequality to the concave function \( \ln(x) \), we get:
	\[
	\lim_{n \to \infty} \frac{1}{n}\sum_{i=1}^{n}\ln(f(p_i)) = \mathbb{E}[\ln(f(P))] \leq \ln(\mathbb{E}[f(P)]) < 0
	\]
	since \( \mathbb{E}[f(P)] \) is in the range (0,1) for \( P \sim U(0,1) \).
	
	Consequently, the probability that \( S_n \) exceeds any positive threshold \( \epsilon \) approaches zero as \( n \) increases:
	\[
	\mathbb{P}(S_n > \epsilon) = \mathbb{P}\left(\sum_{i=1}^{n}\ln(f(p_i)) > \ln(\epsilon)\right) = \mathbb{P}\left(\frac{1}{n}\sum_{i=1}^{n}\ln(f(p_i)) > \frac{\ln(\epsilon)}{n}\right) \to 0 \text{ as } n \to \infty
	\]
	
	Therefore, \( S_n \overset{\mathbb{P}}{\to} 0 \). Furthermore, by the Martingale Convergence Theorem, \( S_n \overset{a.s.}{\to} S_{\infty} \), implying \( S_n \overset{\mathbb{P}}{\to} S_{\infty} \). Due to the uniqueness of limits, \( S_{\infty} \equiv 0 \).
	
\end{proof}

 \begin{theorem}\label{thm:2}
 	When   of the distribution of the p-values is uniform,  for any sequence of betting functions \(f_i\) that converges uniformly to a function \(f\) other than \(f = 1\), it follows that \(S_{\infty} \equiv 0\).
 \end{theorem}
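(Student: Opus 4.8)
The plan is to mirror the proof of Theorem~\ref{thm:1}: pass to logarithms, show that $\frac1n\ln S_n$ has a strictly negative $\limsup$ almost surely, and conclude that $S_n\to 0$ (in fact geometrically fast), so that the almost sure limit $S_\infty$ supplied by the Martingale Convergence Theorem is identically $0$. Writing $\ln S_n=\sum_{i=1}^n\ln f_i(p_i)$, I expect two complications relative to Theorem~\ref{thm:1}: the summands $\ln f_i(p_i)$ are no longer identically distributed, and uniform convergence $f_i\to f$ does not pass to the logarithms, since $f$ may be small or even vanish on part of $[0,1]$ where $\ln$ is unbounded below.

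I would dispose of both complications with a single ``upper-envelope'' estimate. First note that the uniform limit $f$ is again a density on $[0,1]$ (nonnegativity and the integral value $1$ pass to the uniform limit), and since $f\neq 1$, strict Jensen applied to the strictly concave function $\ln$ gives $\mathbb E[\ln f(P)]<\ln\mathbb E[f(P)]=\ln 1=0$ for $P\sim U(0,1)$, the left-hand side possibly being $-\infty$ but in any case negative. Next, fix $\varepsilon>0$; uniform convergence gives an $N$ with $f_i\le f+\varepsilon$ everywhere on $[0,1]$ for all $i\ge N$, so $\ln f_i(p_i)\le\ln\bigl(f(p_i)+\varepsilon\bigr)$ for $i\ge N$. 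The sequence $\ln(f(p_i)+\varepsilon)$ is now i.i.d.\ and integrable — bounded below by $\ln\varepsilon$ and, using $\ln x\le x-1$, with mean at most $\varepsilon$ — hence amenable to the ordinary Strong Law of Large Numbers.

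Then the Strong Law gives $\frac1n\sum_{i=1}^n\ln(f(p_i)+\varepsilon)\to\mathbb E[\ln(f(P)+\varepsilon)]$ almost surely, and since the finitely many initial terms with $i<N$ contribute $o(1)$ after division by $n$, I obtain $\limsup_n\frac1n\ln S_n\le\mathbb E[\ln(f(P)+\varepsilon)]$ almost surely. Letting $\varepsilon\downarrow0$, monotone convergence (the variables $\ln(f(P)+\varepsilon)$ decrease to $\ln f(P)$ and are dominated from above by the integrable $\ln(f(P)+1)\le f(P)$) yields $\mathbb E[\ln(f(P)+\varepsilon)]\downarrow\mathbb E[\ln f(P)]<0$; hence I may fix $\varepsilon_0$ with $\mathbb E[\ln(f(P)+\varepsilon_0)]=-c$ for some $c>0$. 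Consequently $\ln S_n\le-\tfrac{c}{2}n$ for all sufficiently large $n$ almost surely, so $S_n\to0$ almost surely, and together with $S_n\overset{a.s.}{\to}S_\infty$ and uniqueness of limits this forces $S_\infty\equiv0$.

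The main obstacle, as indicated, will be the interchange $\varepsilon\to0$ together with the bookkeeping when $\mathbb E[\ln f(P)]=-\infty$, as well as the degenerate possibility that $f_i(p_i)=0$ for some $i$ (which merely makes $S_n$ exactly $0$ from then on and is harmless); once the envelope bound $f_i\le f+\varepsilon$ is in place, the remainder is the same Jensen-plus-Strong-Law argument used for Theorem~\ref{thm:1}.
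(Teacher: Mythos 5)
Your proposal is correct and follows the same overall strategy as the paper's proof: pass to logarithms, apply a law of large numbers together with Jensen's inequality to obtain a strictly negative average drift, and then combine convergence in probability to $0$ with the almost sure limit $S_\infty$ from the Martingale Convergence Theorem and uniqueness of limits. Where you genuinely differ is in rigor: the paper simply asserts that $\frac{1}{n}\sum_{i=1}^{n}\ln f_i(p_i) \to \mathbb{E}[\ln f(P)]$ ``by uniform convergence and the LLN,'' without addressing the two obstacles you correctly isolate --- the summands are not identically distributed, and uniform convergence of $f_i$ to $f$ gives no control on $\ln f_i$ where $f$ is small or vanishes, so $\mathbb{E}[\ln f(P)]$ may be $-\infty$ and the standard LLN does not directly apply. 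Your upper-envelope step ($f_i \le f+\varepsilon$ for $i\ge N$, reduction to the i.i.d.\ integrable variables $\ln(f(p_i)+\varepsilon)$, then $\varepsilon\downarrow 0$ by monotone convergence) is exactly the repair the paper's argument needs, and it still delivers $\limsup_n \frac{1}{n}\ln S_n<0$ almost surely, hence $S_\infty\equiv 0$. The one hypothesis to keep explicit is that the uniform limit $f$ inherits nonnegativity and $\int_0^1 f\,dp=1$, so that $f\neq 1$ forces $f(P)$ to be non-degenerate and strict Jensen applies --- a point you note and the paper leaves implicit.
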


\begin{proof}[Proof of Theorem \ref{thm:2}]
	Let \( p_i \sim U(0,1) \) and consider the product \( S_n = \prod_{i=1}^{n}f_i(p_i) \), where each \( f_i \) is a betting function taking values in \((0,1)\) and converges uniformly to a betting function \( f \). 
	
	We have:
	\[
	\ln(S_n) = \ln\left(\prod_{i=1}^{n}f_i(p_i)\right) = \sum_{i=1}^{n}\ln(f_i(p_i))
	\]
	
	Now, considering the average of the logarithms:
	\[
	\frac{1}{n}\sum_{i=1}^{n}\ln(f_i(p_i))
	\]
	
	By the uniform convergence of \( f_i \) to \( f \), and assuming that the \( p_i \)'s are i.i.d. samples from a uniform distribution \( U(0,1) \), we can apply the Law of Large Numbers (LLN). The LLN implies that the sample average converges in probability to the expected value. Furthermore, by applying Jensen's Inequality, which states that for a concave function, like the natural logarithm, the inequality \( \mathbb{E}[\phi(X)] \leq \phi(\mathbb{E}[X]) \) holds, we have:
	\[
	\lim_{n \to \infty} \frac{1}{n}\sum_{i=1}^{n}\ln(f_i(p_i)) = \mathbb{E}[\ln(f(P))] \leq \ln(\mathbb{E}[f(P)]) < 0
	\]

	Therefore, as \( n \to \infty \), the probability that \( S_n \) exceeds any positive threshold \( \epsilon \) approaches zero:
	\[
	\mathbb{P}(S_n > \epsilon) = \mathbb{P}\left(\sum_{i=1}^{n}\ln(f_i(p_i)) > \ln(\epsilon)\right) = \mathbb{P}\left(\frac{1}{n}\sum_{i=1}^{n}\ln(f_i(p_i)) > \frac{\ln(\epsilon)}{n}\right) \to 0 \text{ as } n \to \infty
	\]
	
Therefore, \( S_n \overset{\mathbb{P}}{\to} 0 \). Furthermore, by the Martingale Convergence Theorem, \( S_n \overset{a.s.}{\to} S_{\infty} \), implying \( S_n \overset{\mathbb{P}}{\to} S_{\infty} \). Due to the uniqueness of limits, \( S_{\infty} \equiv 0 \).
\end{proof}

Before defining the mathematical formulation of our Cautious Betting Function, let us consider the strategic interplay between two hypothetical players in a game of probability. These players symbolize the dynamic decision-making process inherent in our model. Player 1, evaluates the performance of Player 2, who employs a variable betting strategy based on the density estimator \(f_n\) (Interpolated Histogram or Nearest Neighbours). This evaluation guides Player 1's decision to bet or abstain. The following equation formalizes this strategic evaluation, where the decision to bet hinges on a comparison of recent and past performance metrics:

\begin{equation} \label{eq:hn}
	h_n(x)=\begin{cases}
		1 \quad &\text{if} \, S1_{n-1}/\underset{k}{\min} S1_{n-k}\leq\epsilon \\
		f_n \quad &\text{if} \, S1_{n-1}/\underset{k}{\min} S1_{n-k}>\epsilon  \\
	\end{cases}\\
\end{equation} 
with \(S1_n=\prod_{i=1}^{n}f_i(p_i)\) representing the cumulative product of betting functions applied to p-values, where $k\in\{1,\dots,\min(W,n-1)\}$, the parameters \(\epsilon > 0\) and $W \in \mathbb{N}$. \(\epsilon\), set to $100$, serves as a  threshold, in which betting or not is justified based on the evidence against the exchangeability assumption. A higher \(\epsilon\) implies a more cautious approach, requiring stronger evidence for betting, thus enhancing the model's precision by betting only when substantial evidence is present. Meanwhile, \(W\), fixed at $5000$, determines the amount of historical data considered,  to guide current betting decisions. Theorem \ref{thm:3}  shows than $h_n$ is a betting function.

\begin{theorem}\label{thm:3}
	$h_n$ is a betting function if $f_n$ is a betting function or equivalently a probability distribution function.
\end{theorem}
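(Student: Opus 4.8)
\textbf{Proof proposal for Theorem \ref{thm:3}.}

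The plan is to verify directly the two defining properties of a betting function from the case split in equation (\ref{eq:hn}): nonnegativity, $h_n(p)\ge 0$ for all $p\in[0,1]$, and unit mass, $\int_0^1 h_n(p)\,dp = 1$. The key observation — and essentially the only point needing care — is that the ratio $S1_{n-1}/\min_k S1_{n-k}$ that selects which branch of equation (\ref{eq:hn}) is active depends only on $p_1,\dots,p_{n-1}$ (equivalently on $S_0,\dots,S_{n-1}$), and not on the current p-value $p_n$. Hence, conditionally on the past, exactly one branch is in force, and $h_n$, regarded as a function of its argument, is either the constant $1$ or the density $f_n$.

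First I would treat the branch $S1_{n-1}/\min_k S1_{n-k}\le\epsilon$: here $h_n\equiv 1$, which is nonnegative and satisfies $\int_0^1 1\,dp = 1$, so the constant unity is a (degenerate) betting function. Next I would treat the branch $S1_{n-1}/\min_k S1_{n-k}>\epsilon$: here $h_n=f_n$, which is nonnegative and integrates to $1$ by the hypothesis that $f_n$ is a betting function (a probability density on $[0,1]$). Since the two branches are exhaustive and mutually exclusive given the history, in every case $h_n\ge 0$ and $\int_0^1 h_n(p)\,dp=1$, so $h_n$ is a betting function.

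To close, I would remark that this is precisely the hypothesis needed in Proposition \ref{prob:1}: because the branch indicator is a function of $p_1,\dots,p_{n-1}$ only, $h_n(\cdot)=h_n(\cdot\mid p_1,\dots,p_{n-1})$ is predictable in the required sense, and the computation in the proof of Proposition \ref{prob:1} carries over verbatim to give $\mathbb{E}(S_{n+1}\mid S_0,\dots,S_n)=S_n$, confirming that the associated product $\prod_i h_i(p_i)$ is an exchangeability martingale.

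There is no genuinely hard step here; the one subtlety worth stating explicitly is the predictability remark — that the switch in equation (\ref{eq:hn}) never inspects $p_n$ — which is exactly what licenses integrating the piecewise definition term-by-term over $p_n\in[0,1]$.
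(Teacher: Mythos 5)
Your proposal is correct and follows essentially the same route as the paper: verify nonnegativity and then integrate to $1$ branch-by-branch in equation (\ref{eq:hn}), using the hypothesis on $f_n$ for the betting branch. The only addition is your explicit predictability remark (that the branch selector depends on $p_1,\dots,p_{n-1}$ only, never on $p_n$), which the paper leaves implicit but which is indeed the point that licenses the term-by-term integration.
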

 \begin{proof}[Proof of Theorem \ref{thm:3}]
	
	$h_n$ is obviously always non negative. We also have to  show  that it integrates to 1:
	\begin{equation}
		\int_{0}^{1}h_n(p)dp=\begin{cases}
			\int_{0}^{1}1dp=1 \quad &\text{if} \, S1_{n-1}/\underset{k}{\min} S1_{n-k}\leq\epsilon \\
			\int_{0}^{1}f_n(p|p_1,p_2,\dots,p_{n-1})=1 \quad &\text{if} \, S1_{n-1}/\underset{k}{\min} S1_{n-k}>\epsilon  \\
		\end{cases}\\
	\end{equation} 
	
\end{proof}

We can now make our two player illustration more specific based on equation \ref{eq:hn}. Player 2's strategy is based on the density estimator  $f_n$. Player 1 observes Player 2's performance in the game during a time window $[n-k,n-1]$ and calculates the maximum profit Player 2 could have made by starting to play when $S1_i$ was at its minimum, i.e., $S1_{n-1}/\underset{k}{\min} S1_{n-k}$. If Player 2 incurs losses or has very small profits, then Player 1 opts not to bet, setting  $h_n$ to 1. However, if Player 2 could make a profit greater than a certain threshold, Player 1 bets by setting $h_n=f_n$.  

\subsubsection{Extension to Multiple Density Estimators}
Building on the two-player scenario, we extend our model to a multiplayer context, where multiple players with unique strategies impact decision-making. By incorporating multiple players, each employing distinct density estimators \(f_n^j\). The cautious betting function $h_n$ for this scenario  is defined as follows:

\begin{equation} \label{eq:hn-multi}
	h_n(x)=\begin{cases}
		1 \quad &\text{if} \, \max\limits_{j\in \{2, \dots, M\}} \left( \frac{S1_{n-1}^j}{\underset{k}{\min} S1_{n-k}^j} \right) \leq \epsilon \\
		f_n^m \quad &\text{otherwise, where } m = \argmax\limits_{j\in \{2, \dots, M\}} \left( \frac{S1_{n-1}^j}{\underset{k}{\min} S1_{n-k}^j} \right)  \\
	\end{cases}
\end{equation}

In  Equation (\ref{eq:hn-multi}),  $k$ ranges within $\{1,\dots,W\}$, where $W$  is defined as any integer from $1$ to $n-1$. The parameter $\epsilon$ is a positive threshold value. $M$  denotes the total number of players being observed, where in this study, we use ${1,3,6}$. For each player  $j$, $S1_n^j$ is calculated as the product $\prod_{i=1}^{n}f_i^j(p_i)$, where  $f_i^j$ represents the betting function or density estimator for that player. Finally, $f_n^m$ corresponds to the betting function or density estimator of the player who achieves the maximum ratio of $\frac{S1_{n-1}^j}{\underset{k}{\min} S1_{n-k}^j}$, identifying them as the most successful player within the time window $[n-k,n-1]$. Theorem \ref{thm:4}  shows than $h_n$ is a betting function.
\begin{theorem}\label{thm:4}
	$h_n$ is a betting function iff for every m $f_n^m$ is a betting function or equivalently a probability distribution function.
\end{theorem}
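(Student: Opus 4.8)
The plan is to follow the same two-step recipe used in the proof of Theorem \ref{thm:3}: a sequence of non-negative functions $h_n$ is a betting function precisely when $h_n(p)\ge 0$ for all $p\in[0,1]$ and $\int_0^1 h_n(p)\,dp=1$, since Proposition \ref{prob:1} then automatically upgrades this to the martingale property $\mathbb{E}(S_{n+1}\mid S_0,\dots,S_n)=S_n$. The crucial observation, exactly as in the single-estimator case, is that the branch selector in (\ref{eq:hn-multi})---namely whether $\max_{j}\bigl(S1_{n-1}^j/\min_k S1_{n-k}^j\bigr)$ exceeds $\epsilon$, and which index $m$ attains the maximum---is a function of the already-observed p-values $p_1,\dots,p_{n-1}$ only, hence constant with respect to the integration variable $p$. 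So the integral of $h_n$ over $[0,1]$ reduces to the integral along whichever single branch is active.

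For the direction ``each $f_n^m$ a betting function $\Rightarrow$ $h_n$ a betting function'': non-negativity is immediate, since $h_n$ equals either the constant $1$ or one of the $f_n^m\ge 0$. For normalization, condition on $p_1,\dots,p_{n-1}$: if the active branch is the first, $\int_0^1 1\,dp=1$; if it is the second, then $h_n=f_n^m$ for the concrete maximizing index $m$, and $\int_0^1 f_n^m(p\mid p_1,\dots,p_{n-1})\,dp=1$ by hypothesis. In both cases $\int_0^1 h_n(p)\,dp=1$, and Proposition \ref{prob:1} finishes it. This is essentially Theorem \ref{thm:3} with an extra $\max/\argmax$ bookkeeping layer.

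For the converse ``$h_n$ a betting function $\Rightarrow$ each $f_n^m$ a betting function'', I would argue by realizability: fix any $m\in\{2,\dots,M\}$ and exhibit a history $(p_1,\dots,p_{n-1})$ for which the second branch of (\ref{eq:hn-multi}) is active with $\argmax$ equal to $m$---for instance, by feeding in past p-values concentrated in the region where the estimator $f^m$ places comparatively high mass, so that player $m$'s cumulative product grows fastest and overshoots the threshold $\epsilon$. Along such a history $h_n\equiv f_n^m$ as functions of $p$, so non-negativity and $\int_0^1 h_n=1$ transfer verbatim to $f_n^m$; contrapositively, if some $f_n^m$ fails to be a density, the corresponding history makes $h_n$ fail to integrate to $1$. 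I expect this realizability step to be the only non-routine part: it is less a computation than a (mild) structural assumption that every estimator in the pool can be made the unique maximizer by a suitable choice of past p-values, and a careful writeup should state that reachability hypothesis explicitly rather than leave it implicit in the ``distinct strategies'' phrasing.
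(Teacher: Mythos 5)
Your forward direction (each $f_n^m$ a density $\Rightarrow$ $h_n$ a betting function) is exactly the paper's argument: the paper's proof of Theorem~\ref{thm:4} is a single case-split computation of $\int_0^1 h_n(p)\,dp$, implicitly using the fact you make explicit, namely that the $\max$/$\argmax$ selector depends only on $p_1,\dots,p_{n-1}$ and is therefore constant in the integration variable. Making that point explicit is a genuine improvement in rigor, not a deviation.

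Where you differ is the converse. The paper, despite stating the theorem as an ``iff,'' proves only the forward implication and says nothing about the reverse. Your realizability argument is the right idea for that direction, and your caveat is exactly on target: the claim ``$h_n$ a betting function $\Rightarrow$ every $f_n^m$ a betting function'' is \emph{not} true without the additional hypothesis that every index $m$ can actually be made the unique maximizer by some admissible history with the threshold $\epsilon$ exceeded. If some estimator in the pool is never selected (for instance because another player's cumulative product always dominates it, or because the ratio never exceeds $\epsilon$), that estimator could fail to integrate to $1$ while $h_n$ remains a perfectly valid betting function. So your writeup is, if anything, more honest than the paper's: you identify a gap in the theorem as stated rather than in your own argument. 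For the purpose of matching the paper's proof, your forward direction suffices; if you want the biconditional to hold, you should state the reachability hypothesis explicitly, as you propose.
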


\begin{proof}[Proof of Theorem \ref{thm:4}]
	\[
	\int_{0}^{1}{h_n(p)dp}=\begin{cases}
		\int_{0}^{1}{1dp}=1 & \text{if } \max\limits_{j\in \{2, \dots, M\}} \left( \frac{S1_{n-1}^j}{\underset{k}{\min} S1_{n-k}^j} \right) \leq \epsilon, \\
		\int_{0}^{1}{f_n^m(p|p_1,\dots,p_{n-1})dp}=1 & \text{else, where } m = \argmax\limits_{j\in \{2, \dots, M\}} \left( \frac{S1_{n-1}^j}{\underset{k}{\min} S1_{n-k}^j} \right).
	\end{cases}
	\]

\end{proof}

Player 1 analyzes the performance of each player $j$ within the same time window \([n-k,n-1]\), calculating the maximum potential profit each could have achieved if they had started playing when their respective \( S1_i \) values were at their minimums. Player 1 then assesses whether any of these players have made significant profits exceeding a predefined threshold. If all players incur losses or only garner minimal profits, Player 1 decides not to bet, maintaining $h_n=1$. Conversely, if any player demonstrates a profit above the threshold, Player 1 adopts the strategy of the most successful player, setting $h_n$ to that player's corresponding  $f_n^m$. This is summarized in Equation (\ref{eq:hn-multi}).

\subsection{Density estimators}\label{subsec:de}
Here we introduce and describe the density estimators we have used to combine with the Cautious betting function namely the: Interpolated Histogram Estimator and the Nearest Neighbor Density Estimator.

\subsubsection{Interpolated Histogram Estimator}\label{subsec:hist}

This density estimator is initially calculated similarly to a standard histogram. The p-values $p_i \in [0,1]$, so we partition $[0,1]$ into a predefined number of equal bin intervals $\kappa$ and calculate the frequency of the observations in each bin. The histogram estimator is obtained by dividing these frequencies by the total number of observations and then multiplying by the number of bins.

We define the bins as follows: 
\[ B_1 = [0, 1/\kappa), B_2 = [1/\kappa, 2/\kappa), \dots, B_{\kappa-1} = [(\kappa-2)/\kappa, (\kappa-1)/\kappa), B_{\kappa} = [(\kappa-1)/\kappa, 1]. \]

For a p-value $p_n$ belonging to bin $B_j$, the density estimator is calculated as: 
\begin{equation}\label{eq:est} 
	\hat{f}_n(p_n) = \frac{n_j \cdot \kappa}{n-1},
\end{equation} 
where $n-1$ is the number of p-values observed so far and $n_j$ is the count of p-values in $B_j$. 

In cases where $n$ is small and there are empty bins (i.e., $\exists x : \hat{f}_n(x) = 0$), we reduce the number of bins $\kappa$ by 1. This reduction is repeated until there are no empty bins (i.e., $\nexists x : \hat{f}_n(x) = 0$).

To create an interpolated histogram, we then proceed as follows:

1. \textbf{Calculate Bin Centers}: Determine the center of each bin. The center $c_j$ of bin $B_j$ is given by $c_j = \frac{(2j-1)}{2\kappa}$ for $j = 1, \dots, \kappa$.

2. \textbf{Interpolate}: Apply an interpolation method (such as linear, spline, or polynomial interpolation) to these bin centers and their corresponding density estimator values $\hat{f}_n(c_j)$.

3. \textbf{Create Smoother Curve}: This interpolation creates a continuous curve that represents a smoother estimate of the density, providing a  continuous view of the data distribution compared to the discrete histogram.

Given a histogram with bins $B_j$ for $j = 1, \dots, \kappa$, the center $c_j$ of each bin is calculated as:
\begin{equation}\label{c_j}
	c_j = \frac{(2j-1)}{2\kappa}, \quad j = 1, \dots, \kappa.
\end{equation}

The density estimator for each bin center is given by:
\begin{equation}\label{f_n(c_j)}
	\hat{f}_n(c_j) = \frac{n_j \cdot \kappa}{n-1},
\end{equation}
where $n_j$ is the count of p-values in bin $B_j$, and $n-1$ is the total number of p-values observed so far.

To create the interpolated histogram, an interpolation method is applied to the set of points $\{(c_j, \hat{f}_n(c_j))\}_{j=1}^{\kappa}$. If we use linear interpolation, for example, the interpolated value at any point $x \in [c_1, c_k]$ can be approximated as:
\[
\hat{f}_n^{interp}(x) \approx \text{LinearInterpolation}\left(\{(c_j, \hat{f}_n(c_j))\}_{j=1}^{\kappa}, x\right).
\]
It's important to note that there are specific intervals within $[0,1]$ where linear interpolation is not applicable. In these intervals, the function $\hat{f}_n(x)$ takes constant values based on the nearest boundary points.

\begin{lemma}\label{eq:int_est} 
	The function \(\hat{f}_n^{interp}(x)\) for \(x \in [0, 1]\) can be defined as:
	\[
	\hat{f}_n^{interp}(x) = 
	\begin{cases}
		\hat{f}_n(c_1) & \text{if } x \in [0, c_1] \quad \text{(no interpolation)}, \\
		\left( \frac{\hat{f}_n(c_j) - \hat{f}_n(c_{j-1})}{c_j - c_{j-1}} \right) \cdot (x - c_{j-1}) + \hat{f}_n(c_{j-1}) & \text{if } x \in [c_{j-1}, c_j], \; j \in \{2, \ldots, k\}, \\
		\hat{f}_n(c_k) & \text{if } x \in [c_k, 1] \quad \text{(no interpolation)}.
	\end{cases}
	\]
\end{lemma}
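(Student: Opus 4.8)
The plan is to treat the statement as a direct unpacking of the definition of piecewise-linear interpolation through the points $\{(c_j, \hat{f}_n(c_j))\}_{j=1}^{\kappa}$, together with the stated convention of constant extension on the two extrapolation intervals. First I would record the elementary facts about the bin centers: from the definition \eqref{c_j} we have $c_j = (2j-1)/(2\kappa)$, so $c_1 = 1/(2\kappa) > 0$, $c_\kappa = (2\kappa-1)/(2\kappa) < 1$, and consecutive centers are equally spaced with $c_j - c_{j-1} = 1/\kappa$ for $j = 2, \dots, \kappa$. In particular the intervals $[0, c_1]$, the intervals $[c_{j-1}, c_j]$ for $j = 2, \dots, \kappa$, and $[c_\kappa, 1]$ are all nondegenerate and their union is exactly $[0,1]$, so the three cases in the claimed formula are exhaustive.

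Next I would handle the interior intervals. On $[c_{j-1}, c_j]$ the interpolant is, by definition, the unique affine function passing through $(c_{j-1}, \hat{f}_n(c_{j-1}))$ and $(c_j, \hat{f}_n(c_j))$; writing this line in point–slope form anchored at $c_{j-1}$ gives exactly
\[
\left(\frac{\hat{f}_n(c_j) - \hat{f}_n(c_{j-1})}{c_j - c_{j-1}}\right)(x - c_{j-1}) + \hat{f}_n(c_{j-1}),
\]
which is the middle case (and where $c_j - c_{j-1} = 1/\kappa$ could be substituted if a simpler form is preferred). I would then check consistency at a shared endpoint $c_j$ with $2 \le j \le \kappa-1$: evaluating the formula for $[c_{j-1}, c_j]$ at $x = c_j$ and the formula for $[c_j, c_{j+1}]$ at $x = c_j$ both return $\hat{f}_n(c_j)$, so the piecewise definition is unambiguous and continuous at the junctions.

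Finally, for the two boundary intervals $[0, c_1]$ and $[c_\kappa, 1]$ there is no pair of data points spanning them, so interpolation is not applicable as the text notes; the adopted convention assigns the nearest available bin-center estimate, namely $\hat{f}_n(c_1)$ on $[0, c_1]$ and $\hat{f}_n(c_\kappa)$ on $[c_\kappa, 1]$, giving the first and third cases, and these agree with the interior formula at $x = c_1$ and $x = c_\kappa$ respectively, so the assembled function is continuous and well defined on all of $[0,1]$ (each $\hat{f}_n(c_j) = n_j\kappa/(n-1)$ is finite and, by the empty-bin reduction, positive). There is no genuinely hard step here; the only point requiring care is stating the extrapolation convention explicitly and verifying that the three pieces match at $c_1$ and $c_\kappa$, so that the resulting $\hat{f}_n^{interp}$ is a single, well-defined, nonnegative function rather than merely a collection of local formulas.
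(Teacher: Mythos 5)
Your proposal is correct and takes essentially the same approach as the paper: the core step in both is deriving the unique affine function through consecutive points $(c_{j-1},\hat{f}_n(c_{j-1}))$ and $(c_j,\hat{f}_n(c_j))$ and rewriting it in point--slope form to match the middle case. You are in fact somewhat more thorough than the paper's own proof, which only derives the line equation and does not explicitly verify exhaustiveness of the three cases, continuity at the junctions, or the constant-extension convention on $[0,c_1]$ and $[c_\kappa,1]$.
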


\begin{proof}
	Let \((c_j, \hat{f}_n(c_j))\) and \((c_{j-1}, \hat{f}_n(c_{j-1}))\) be two points. The slope \(m\) of the line passing through these points is given by:
	\[ m = \frac{\hat{f}_n(c_j) - \hat{f}_n(c_{j-1})}{c_j - c_{j-1}} \]
	The y-intercept \(b\) can be found using the point-slope form:
	\[ b = \hat{f}_n(c_j) - m \cdot c_j \]
	Substituting the value of \(m\) into \(b\):
	\[ b = \hat{f}_n(c_j) - \left( \frac{\hat{f}_n(c_j) - \hat{f}_n(c_{j-1})}{c_j - c_{j-1}} \right) \cdot c_j \]
	Therefore, the equation of the line for linear interpolation between the points is:
	\[ y = \left( \frac{\hat{f}_n(c_j) - \hat{f}_n(c_{j-1})}{c_j - c_{j-1}} \right) x + \left( \hat{f}_n(c_j) - \left( \frac{\hat{f}_n(c_j) - \hat{f}_n(c_{j-1})}{c_j - c_{j-1}} \right) \cdot c_j \right) \]
	Alternatively, this can be expressed as:
	\[ y = \left( \frac{\hat{f}_n(c_j) - \hat{f}_n(c_{j-1})}{c_j - c_{j-1}} \right) \cdot (x - c_{j-1}) + \hat{f}_n(c_{j-1}) \]
	\qedhere
\end{proof}
This results in a smoother curve $\hat{f}_n^{interp}(x)$, providing a continuous representation of the density estimate across the interval $[0,1]$, except in the specified intervals where no interpolation occurs. Lemma \ref{lemma:2} shows that this betting function integrates to 1.

\begin{lemma}\label{lemma:2}
	The function $\hat{f}_n^{interp}(x)$ integrates to $1$.
\end{lemma}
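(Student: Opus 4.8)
The plan is to reduce the claim to the elementary fact that the underlying, unsmoothed histogram already integrates to $1$, and then to check that replacing it by its linear interpolation together with the two constant ``tails'' on $[0,c_1]$ and $[c_\kappa,1]$ preserves this normalization. First I would record the basic identity $\sum_{j=1}^{\kappa} n_j = n-1$, which holds because each of the $n-1$ observed p-values lies in exactly one bin; dividing by $n-1$ and invoking \eqref{f_n(c_j)} this yields $\frac{1}{\kappa}\sum_{j=1}^{\kappa}\hat{f}_n(c_j)=1$, the value that the piecewise integral must reproduce.

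Next I would use the explicit description of $\hat{f}_n^{interp}$ from Lemma \ref{eq:int_est} to split $\int_0^1 \hat{f}_n^{interp}(x)\,dx$ into three groups: the left tail $[0,c_1]$, the interior intervals $[c_{j-1},c_j]$ for $j=2,\dots,\kappa$, and the right tail $[c_\kappa,1]$. Since $c_1=\tfrac{1}{2\kappa}$, $c_j-c_{j-1}=\tfrac{1}{\kappa}$ and $1-c_\kappa=\tfrac{1}{2\kappa}$, the two tails contribute $\tfrac{1}{2\kappa}\hat{f}_n(c_1)$ and $\tfrac{1}{2\kappa}\hat{f}_n(c_\kappa)$ respectively (the integrand is constant there), while on each interior interval the integrand is affine, so its integral equals the interval length times the average of the endpoint values, namely $\tfrac{1}{\kappa}\cdot\tfrac{1}{2}\bigl(\hat{f}_n(c_{j-1})+\hat{f}_n(c_j)\bigr)$.

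Finally I would collect the coefficient of each $\hat{f}_n(c_j)$ in the resulting sum. The value $\hat{f}_n(c_1)$ picks up weight $\tfrac{1}{2\kappa}$ from the left tail and $\tfrac{1}{2\kappa}$ from the trapezoid over $[c_1,c_2]$; symmetrically $\hat{f}_n(c_\kappa)$ picks up $\tfrac{1}{2\kappa}$ from $[c_{\kappa-1},c_\kappa]$ and $\tfrac{1}{2\kappa}$ from the right tail; and every interior $\hat{f}_n(c_j)$ appears in exactly the two trapezoids over $[c_{j-1},c_j]$ and $[c_j,c_{j+1}]$, again giving total weight $\tfrac{1}{2\kappa}+\tfrac{1}{2\kappa}=\tfrac{1}{\kappa}$. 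Hence $\int_0^1\hat{f}_n^{interp}(x)\,dx=\tfrac{1}{\kappa}\sum_{j=1}^{\kappa}\hat{f}_n(c_j)=1$ by the first step. The only point requiring care — the closest thing to an obstacle — is verifying that the half-width tails at the two ends exactly compensate for $c_1$ and $c_\kappa$ each lying in only one interior trapezoid rather than two; once the interval lengths $\tfrac{1}{2\kappa}$ and $\tfrac{1}{\kappa}$ are written out explicitly this balances automatically. The degenerate case $\kappa=1$, in which $\hat{f}_n^{interp}\equiv\hat{f}_n(c_1)=1$ on all of $[0,1]$, is immediate and can be noted separately.
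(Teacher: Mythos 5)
Your proposal is correct and follows essentially the same route as the paper's proof: both decompose $\int_0^1 \hat{f}_n^{interp}$ into the two constant end rectangles of width $\tfrac{1}{2\kappa}$ plus the $\kappa-1$ trapezoids of width $\tfrac{1}{\kappa}$, and both reduce the total to the identity $\sum_{j=1}^{\kappa} n_j = n-1$. The only cosmetic difference is that you collect the coefficient of each $\hat{f}_n(c_j)$ to get $\tfrac{1}{\kappa}\sum_j \hat{f}_n(c_j)$, whereas the paper sums the rectangle and trapezoid areas directly in terms of the bin counts $n_j$; the bookkeeping is equivalent.
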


\begin{proof}{Proof of Lemma \ref{lemma:2}}
	Recall from equations \ref{c_j} and \ref{f_n(c_j)} that 
	\[
	c_j = \frac{(2j-1)}{2\kappa}, \quad \hat{f}_n(c_j) = \frac{n_j \cdot \kappa}{n-1}, \quad \text{for } j = 1, \dots, \kappa.
	\]
	Thus, the integral of $\hat{f}_n^{interp}(x)$ over $[0, 1]$ is given by:
	\begin{align*}
		\int_0^1 \hat{f}_n^{interp}(x) \, dx &= \int_0^{c_1} \hat{f}_n(c_1) \, dx + \sum_{j=2}^\kappa \int_{c_{j-1}}^{c_j} \left(\frac{\hat{f}_n(c_j) - \hat{f}_n(c_{j-1})}{c_j - c_{j-1}} (x - c_{j-1}) + \hat{f}_n(c_{j-1})\right) \, dx \\ &\quad + \int_{c_\kappa}^1 \hat{f}_n(c_\kappa) \, dx.
	\end{align*}
	It is straightforward to see that $\int_0^{c_1} \hat{f}_n(c_1) \, dx$ and $\int_{c_\kappa}^1 \hat{f}_n(c_\kappa) \, dx$ represent the areas of two rectangles with dimensions $c_1$ and $\hat{f}_n(c_1)$, and $1 - c_\kappa$ and $\hat{f}_n(c_\kappa)$, respectively.
	
	Also, for $j = 2, \dots, \kappa$, the integral 
	\[
	\int_{c_{j-1}}^{c_j} \left(\frac{\hat{f}_n(c_j) - \hat{f}_n(c_{j-1})}{c_j - c_{j-1}} (x - c_{j-1}) + \hat{f}_n(c_{j-1})\right) \, dx
	\]
	is equal to the area of a trapezoid with height $\frac{1}{\kappa}$ and bases $\hat{f}_n(c_{j-1})$ and $\hat{f}_n(c_{j})$.
	
	The total area of the two rectangles is:
	\[
	A_1 + A_\kappa = \frac{1}{2\kappa} \cdot \frac{n_1 \cdot \kappa}{n-1} + (1 - \frac{2\kappa - 1}{2\kappa}) \cdot \frac{n_\kappa \cdot \kappa}{n-1} = \frac{n_1 + n_\kappa}{2(n-1)}
	\]
	The area of each trapezoid is:
	\[
	A_k = \frac{\left(\hat{f}_n(c_{j-1}) + \hat{f}_n(c_j)\right) \cdot \frac{1}{\kappa}}{2} = \frac{n_{j-1} + n_j}{2(n-1)}
	\]
	Summing all areas,
	\[
	A = \frac{n_1 + n_\kappa}{2(n-1)} + \sum_{j=2}^\kappa \frac{n_{j-1} + n_j}{2(n-1)} = 1
	\]
	since $\sum_{j=1}^\kappa n_j = n - 1$.
\end{proof}

\subsubsection{Nearest Neighbor Density Estimator}
The nearest neighbor density estimator, specifically the k-nearest neighbors (kNN) density estimator, is a non-parametric method used in statistical analysis for estimating the probability density function   of a random variable in a one-dimensional space, such as an interval. This method, as described below, is taken from \cite{KNN_Density_Estimation}.

Consider a set of \( N \) i.i.d. samples \( X_1, X_2, \ldots, X_N \) from a distribution within the interval \([0,1]\) with an unknown pdf \( f: \mathbb{R} \rightarrow \mathbb{R} \). The kNN density estimator aims to estimate the pdf at a point \( x \) in this interval. The estimation involves counting the fraction of samples within an interval centered at \( x \) with radius \( R_k(x) \), where \( R_k(x) \) is the distance to the \( k \)-th nearest neighbor of \( x \), and normalizing by the length of this interval and the total number of samples.

The length of the interval \( L(R_k(x)) \) is determined as follows, considering the boundaries of the interval \([0,1]\):
\begin{equation}
	L(R_k(x)) = \min(1 - x, R_k(x)) + \min(x, R_k(x))
\end{equation}

The estimator in the one-dimensional case is given by:
\begin{equation} \label{NN_estimator}
	\hat{f}(x) = \frac{\min(n-1,k-1)}{nL(R_k(x))}
\end{equation}
where \( \hat{f}(x) \) is the estimated density at point \( x \), \( k \) is the number of nearest neighbors, \( n \) is the total number of data points, and \( L(R_k(x)) \) is defined as above.

The choice of \( k \), the number of nearest neighbors, is critical. A smaller \( k \) provides a more localized estimate, while a larger \( k \) leads to a smoother estimate. Despite its simplicity, the estimator is effective in one-dimensional spaces and provides an approximately unbiased estimate of the density.

\subsection{Detecting CD using ICM}\label{subsec:CD_det1}
In order to detect a CD at a pre-specified significance level $\delta$, the Martingale value must exceed 1/$\delta$, which leads to the rejection of the EA. This process is summarized in Algorithm \ref{alg:1}. Specifically, if the Martingale value $S_k$ at a given point $k$ exceeds 100, a CD is detected with a significance level of 1\%, where $L$ denotes the number of p-values that our estimator uses.

	\begin{algorithm}
		\caption{Detect CD using ICM}\label{alg:1}
		\begin{algorithmic}[1]
			\Require Training set $\{z_1, z_2, \dots, z_k\}$, Test set $\{z_{k+1}, \dots, z_n\}$, significance level $\delta$
			\State Initialize $S_1 = 1$
			\For{$i = 1$ to $n - k$}
			\State $\alpha_i = A(z_{k + i}, \{z_1, \dots, z_k\})$
			\State $p_i = \frac{\#\{j : \alpha_j > \alpha_i\} + U_i \#\{j : \alpha_j = \alpha_i\}}{i}$
			\State Calculate betting function $h_i = h(p_{i - L}, \dots, p_{i - 1})$
			\State $S_i = S_{i - 1} \cdot h_i(p_i)$
			\If{$S_i > \frac{1}{\delta}$}
			\State Raise an Alarm
			\EndIf
			\EndFor
		\end{algorithmic}
	\end{algorithm}

\subsection{Ensemble ICM }\label{subsec:ens_icm}
In our previous study \citep{pmlr-v179-eliades22a}, we used a single classifier and  ICM to detect CDs. When a CD was detected, we waited until a pre-specified number of instances arrived to be used as  training set. In our current methodology for addressing the CD problem, we train 10 classifiers using a predetermined number of instances $\theta = \{100, \dots, 1000\}$, due to computational constraints as it is very difficult to calculate an optimum value of $\theta$ and the ensemble size, since it depends on the characteristics of each dataset. Each classifier is trained with a different number of instances, and as new observations arrive, a distinct sequence of p-values is calculated for each classifier. We apply a different ICM to each sequence of p-values to detect  possible CDs. If a CD is detected, the affected classifier stops making predictions.  

Since a CD begins before it is detected, we  go backward in time to construct the new training set. In particular, the new training set consists of the $k$ instances starting from the point that the Martingale has exceeded a threshold $r$ (smaller than the detection threshold); note that, if necessary, we wait until a sufficient number of instances are observed.   Specifically, assuming that a CD occurs at an instance with timestamp $i$, we find the closest timestamp $d$ to $i$ in which the Martingale passes a value equal to $r = \{2, 10, 100\}$. We include instances with indices $\{d,\dots, d+k\}$ as the training set, where $d=\max\{j:S_j^\mathrm{classifier}<r\}+1$ and the classifier is retrained at point $d+k$. Adding instances in the training set that arrived before an alarm was raised allows for predictions for as many instances as possible.
This entire process is carried out in parallel for each classifier, and a majority vote is used to predict an instance. The entire algorithm is summarized in Algorithm \ref{alg:2}. Even in cases where the algorithm raises a false alarm and forces model retraining, the other classifiers can still provide predictions for the corresponding time interval. If the algorithm fails to detect a CD in the p-values produced by a classifier, this classifier might provide misleading predictions. However, since other classifiers have been retrained, they can counterbalance the predictions of the classifier that was not retrained, thereby ensuring that performance is maintained.

It should be noted that calculating the optimum value of $\theta$ and the number of classifiers is computationally intensive. We opted for a practical approach to overcome these constraints by selecting a range of delay values. Specifically, we considered $\theta = \{100, \dots, 1000\}$. Although this range may not guarantee maximum accuracy, it strikes a balance between computational efficiency and assessing the impact of delays on performance for the different datasets used in this study.

	\begin{algorithm}
		\caption{CD Detection using Multiple Classifiers}\label{alg:2}
		\begin{algorithmic}[1]
			\Require Instances $\{z_1, z_2, \dots, z_n\}$, significance level $\delta$, $L$, $\theta=[100,200,\dots,1000]$, $r$
			\State Initialize $Model\ is\ updated[1:10] = False$, $d[1:10]=1$
			\For{$i=1$ to $n$}
			\For{Classifier $=1$ to $10$}
			\If{$Model\ is\ updated[\mathrm{Classifier}] = true$}
			\State $\alpha^{\mathrm{Classifier}} = \alpha^{\mathrm{Classifier}} \cup A^{\mathrm{Classifier}}(z_{i},X_{\mathrm{Classifier}})$
			\State $U_i^{\mathrm{Classifier}} = rand()$ \Comment{a random number $U_i^{\mathrm{Classifier}} \in U(0,1)$}
			\State $p^{\mathrm{Classifier}}_i = \frac{\#\{j : \alpha^{\mathrm{Classifier}}_j > \alpha^{\mathrm{Classifier}}_\text{end}\} + U_i^{\mathrm{Classifier}} \#\{j : \alpha^{\mathrm{Classifier}}_j = \alpha^{\mathrm{Classifier}}_\text{end}\}}{\#a^{\mathrm{Classifier}}}$
			\State \Comment{Where $alpha^{\mathrm{Classifier}}_\text{end}$ is the last  element added in $alpha^{\mathrm{Classifier}}$}
			\State Calculate betting function $h_i = h(p_{i-L}^{\mathrm{Classifier}}, \dots, p_{i-1}^{\mathrm{Classifier}})$
			\State $S^\mathrm{Classifier}_i = S^\mathrm{Classifier}_{i-1} \cdot h_i(p_i^{Classifier})$
			\State $pred_i^{\mathrm{Classifier}}$ = prediction for instance $z_i$ from model $Classifier$
			\If{$S^\mathrm{classifier}_i > \frac{1}{\delta}$}
			\State $Model\ is\ updated[\mathrm{Classifier}] = false$
			\State $d[\mathrm{Classifier}] = \max\{i : S^\mathrm{classifier}_i < r\} + 1$
			\State  $\alpha^\mathrm{Classifier}=[]$;\Comment{reinitialize nonconformity vector for the specific Classifier} 
			\EndIf
			\ElsIf{$i - d[\mathrm{Classifier}] \leq \theta[Classifier]$}
			\State $X_{\mathrm{Classifier}} = \{z_{d[Classifier]}, \dots, z_{d[\mathrm{Classifier}] + \theta[\mathrm{Classifier}]}\}$
			\State Retrain Model($Classifier$) using $X_{\mathrm{Classifier}}$ as training set
			\State $Model\ is\ updated[Classifier] = true$
			\State Update nonconformity function $A^{\mathrm{Classifier}}$ using the Model($Classifier$)
			\State $S^\mathrm{Classifier}_i = 1$
			\EndIf
			\EndFor
			\State The predicted label $pred_i$ for instance $z_i$ is the mode of the predicted labels from the 10 classifiers.
			\EndFor
		\end{algorithmic}
	\end{algorithm}

\section{Experiments and Results}\label{sec:res}

In this section, we conduct an experimental evaluation of the proposed approach (Algorithm \ref{alg:2})  and compare its performance to that of existing methods. Specifically, we examine the performance improvement of the ICM ensemble using two synthetic datasets (STAGGER, SEA) and two real datasets (ELEC, AIRLINES). We compare the proposed approach to three state-of-the-art methods from the literature: the DWM method \citep{dwm:2007}, the AWE method \citep{awe} and the ARF method \citep{Gomes2017AdaptiveRF} .

\subsection{Datasets} \label{subsec:Dataset}

\subsubsection{Synthetic Benchmark Datasets}
The STAGGER dataset \citep{stagger:dataset} is a standard benchmark dataset for evaluating CD detection. To conduct our simulations, we generated one million instances consisting of three categorical attributes and a binary target output. The drift type we considered is sudden, with four different concepts. A drift occurs every 10,000 examples, which corresponds to the chunk size.

The SEA \citep{DATASET:SEA} dataset  is a popular synthetic dataset that contains sudden CD. We have generated 1,000,000 instances for our simulations, where each example is described by 3 numeric attributes and has a binary label. There are four concepts and drift occurs every 250,000 examples (i.e. each chunk consists of 250,000 examples).   In this dataset, the three variables take random values from the interval [0,10] and if the sum of the first two variables is less than or equal to a pre-specified threshold, then the instance is assigned to class 1 otherwise to 0, the third variable is irrelevant. It is worth noting that here we test the transition from concept $a\rightarrow b\rightarrow c\rightarrow d$ while in the case of STAGGER, we test the transition from concept $a\rightarrow b\rightarrow c\rightarrow d\rightarrow a,\dots$. It should be noted that in our study, we introduced noise to both datasets by changing 10\% of the labels. We evaluated the performance of our approach with and without noise to assess its robustness to noisy data.

\subsubsection{Real World Benchmark Datasets}

The ELEC dataset \citep{DATASET:ELEC} is a time series dataset containing 45,312 instances recorded at 30-minute intervals with a binary class label that indicates whether there has been a rise or a drop in price compared to a moving average of the last 24 hours. Each instance in the dataset consists of eight variables. The data has been collected from the Australian New South Wales Electricity Market. In our experiments, we excluded the variables related to time, date, day, and period and only used nswprice, nswdemand, transfer, vicprice, and vicdemand as predictors. We aimed to predict future values using a training set that consists of observations from less than one week.  Note that sorting the data in this dataset based on the time it was received is crucial.

The Airlines dataset, referenced in \citep{Airlines}, is a collection of flight arrival and departure records of commercial flights within the USA from October 1987 to April 2008. The data was gathered from various USA airports and contains 539,383 instances, each described by seven features. The main objective of this dataset is to classify whether a given flight was delayed or not.

\subsection{Experimental  Setting}

This section details the configurations used in our experiments, where we employed a single ICM as in \cite{pmlr-v179-eliades22a} and an ICM ensemble composed of 10 treebaggers. Each treebagger contains $40$ trees constructed using the bootstrap aggregating (bagging) technique. Specifically, subsets of the training data are randomly selected with replacement, and the resulting trees are combined using majority voting to form the final ensemble. For each example, the 10 classifiers provide prediction labels, and the final prediction is obtained through majority voting. Each classifier outputs the posterior probability $\tilde{p}_j$ for the true label $y_j$, and the NCM is defined as $\alpha_j=-\tilde{p}_j$. As mentioned before, a separate sequence of p-values is computed for each classifier, and we apply ICM on each produced sequence of p-values to detect CD. Once a CD is detected, the current classifier ceases predictions and is retrained after a sufficient number of observations have been collected. As in \cite{pmlr-v204-eliades23a} to determine the new training set for each classifier, as described in Algorithm \ref{alg:2} we use $\theta={100, 200, \dots, 1000}$ which refers to the number of instances used for training each classifier and $r={2,10,100}$ which is a threshold value used to determine when to start the formation of a new training set before the point where a CD is detected.  Specifically, we retrain the specific classifier when the Martingale value of the corresponding classifier exceeds a value $1/\delta$, where $\delta=0.01$ is the required significance level.

In our study, the Cautious Betting function is employed in combination with some variations of the Interpolated Histogram and Nearest Neighbor, resulting in the following betting functions:

a) Cautious-InterHist: This betting function combines the Cautious Betting approach with an interpolated Histogram density estimator(we applied interpolation between the centers of the bins) as presented in Proposition \ref{eq:int_est} featuring $15$ bins. It utilizes a single interpolated histogram to inform the betting function.

b) Cautious-MultiInterHist: This variant integrates the Cautious Betting function with three different interpolated histogram density estimators (Proposition~\ref{eq:int_est}), as specified in Equation~\ref{eq:hn-multi}. In this setup, \( M = 3 \), and each of the density estimators \( f_n^1, f_n^2, \) and \( f_n^3 \) corresponds to a distinct bin count of $5$, $10$, and $15$ bins, respectively.

c) Cautious-Multi(InterHist-NN): In this configuration, the Cautious Betting function is combined with three interpolated histogram density estimators and three nearest neighbor density estimators, as specified in Equation~\ref{eq:hn-multi}. In this setup, \( M = 6 \), with each density estimator identified as follows: $f_n^1, f_n^2$, \text{ and } $f_n^3$: Interpolated histogram density estimators (Proposition~\ref{eq:int_est}) with bin counts of $5$, $10$, and $15$ bins, respectively.  $f_n^4, f_n^5$, and  $f_n^6$: Nearest neighbor density estimators (Equation~\ref{NN_estimator}) with neighbor counts of 5, 10, and 15, respectively.

Throughout these experimental configurations, the parameters of the Cautious Betting function, denoted as $\epsilon$ and W, are consistently maintained at $100$ and $5,000$, respectively, as in \cite{pmlr-v179-eliades22a}.

In the forthcoming subsection, we will conduct simulations on two Synthetic and two Real-world datasets, focusing on evaluating the accuracy and the rate of available predictions. The presented results are averaged over five simulations on each dataset. Additionally, we compare the accuracy of the proposed approach with that of two state-of-the-art algorithms: AWE and DWM-NB, described in Section \ref{sec:rw}. The accuracies of these two algorithms were obtained from \cite{accur}.    
\subsection{Hypothesis Testing Procedure}\label{HT}

In our experiments, we applied a hypothesis testing procedure to compare the accuracy of the new model with the existing model. We first prove a lemma to help us perform our hypothesis test. Then, we show the required steps for implementing these hypothesis tests.

\begin{lemma}\label{lemma:3}
	Let \( x_i \) be a binary vector of size \( n \times 1 \), representing the outcomes of \( n \) independent Bernoulli trials with success probability \( p \) in the \( i \)-th simulation. Assume the following:
	\begin{enumerate}
		\item Each \( x_i \) is independently and identically distributed (i.i.d.) across trials within a simulation.
		\item The simulations are independent of each other, implying that \( x_i \) for different \( i \) are independent.
		\item The probability of success \( p \) remains constant across all trials and simulations.
	\end{enumerate}
	Let \( X_i = \frac{\sum_{j = 1}^{n} x_{i,j}}{n} \) denote the proportion of correctly classified instances by a classifier in the \( i \)-th simulation. Suppose the classifier's performance (success rate) is \( p \) in each simulation, and the number of simulations is \( k \). The overall mean of correct predictions across all simulations is given by
	\[
	X = \frac{\sum_{i = 1}^{k} X_i}{k}.
	\]
	Then, under these assumptions, the aggregate mean \( X \) follows a normal distribution with mean \( p \) and variance:
	\[
	\text{Var}(X) = \frac{p(1 - p)}{kn}.
	\]
\end{lemma}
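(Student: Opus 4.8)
The plan is to observe that the aggregate mean \(X\) is simply the sample mean of the full pooled collection of \(kn\) Bernoulli trials, compute its first two moments directly, and then invoke the Central Limit Theorem for the normal approximation. So first I would rewrite
\[
X = \frac{1}{k}\sum_{i=1}^{k} X_i = \frac{1}{kn}\sum_{i=1}^{k}\sum_{j=1}^{n} x_{i,j},
\]
which exhibits \(X\) as the arithmetic mean of the \(kn\) random variables \(x_{i,j}\). By assumptions (1)–(3) these \(kn\) variables are mutually independent and identically distributed as Bernoulli(\(p\)).

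Next I would compute the mean and variance. Linearity of expectation gives \(\mathbb{E}[X] = \frac{1}{kn}\sum_{i,j}\mathbb{E}[x_{i,j}] = \frac{1}{kn}\cdot kn\cdot p = p\). For the variance, independence makes all cross-covariances vanish, so \(\text{Var}(X) = \frac{1}{(kn)^2}\sum_{i,j}\text{Var}(x_{i,j}) = \frac{1}{(kn)^2}\cdot kn\cdot p(1-p) = \frac{p(1-p)}{kn}\), using \(\text{Var}(x_{i,j}) = p(1-p)\) for a Bernoulli(\(p\)) variable. These steps are entirely routine.

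For the distributional claim I would apply the Central Limit Theorem to the pooled i.i.d.\ sequence: the standardized quantity \(\sqrt{kn}\,(X-p)/\sqrt{p(1-p)}\) converges in distribution to \(N(0,1)\), so \(X\) is approximately \(N\!\bigl(p,\,p(1-p)/(kn)\bigr)\). An alternative route is to first apply the CLT within each simulation so that \(X_i\) is approximately \(N\!\bigl(p,\,p(1-p)/n\bigr)\), and then use the fact that an average of \(k\) independent normal variables is again normal with variance divided by \(k\); this reproduces the same conclusion. The one point requiring care — and the only genuinely nontrivial step — is that the normality here is a large-sample approximation rather than an exact identity, since a pooled sum of Bernoullis is exactly Binomial(\(kn,p\)); the statement should therefore be understood in the asymptotic (CLT) sense, which is what justifies its later use in the hypothesis testing procedure.
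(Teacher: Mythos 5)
Your proof is correct, and its second half even sketches the paper's own argument as an ``alternative route.'' The paper proceeds in two stages: it first applies the central limit theorem within each simulation to conclude $X_i \sim N\bigl(p, p(1-p)/n\bigr)$ approximately, and then computes $\mathrm{Var}(X) = \frac{1}{k^2}\sum_i \mathrm{Var}(X_i) = \frac{p(1-p)}{kn}$ using independence across simulations, invoking the CLT once more for the normality of $X$. Your primary route instead pools all $kn$ Bernoulli variables into a single i.i.d.\ sample and applies the CLT once; the moment computations are identical, so the two arguments buy essentially the same thing, though yours is marginally cleaner in that it needs only $kn \to \infty$ rather than a per-simulation normal approximation followed by the exact normality of averages of normals. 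Your closing remark --- that the pooled sum is exactly $\mathrm{Binomial}(kn,p)$ and the stated normality is only a large-sample approximation --- is a point the paper's proof does not make explicit and is worth having on record, since the lemma as stated asserts normality rather than approximate normality.
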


\begin{proof}{proof of Lemma \ref{lemma:3}} 
	1. \textbf{Distribution of Individual Simulation Means \( X_i \):}
	Each binary vector \( x_i \) contains the results of \( n \) independent Bernoulli trials, each with a success probability of \( p \). Thus, the sample mean \( X_i \) follows a normal distribution by the central limit theorem (CLT) due to the sufficiently large number of trials \( n \):
	\[
	X_i \sim N\left(p, \frac{p(1 - p)}{n}\right).
	\]
	
	2. \textbf{Variance of Aggregate Mean \( X \):}
	The overall mean \( X \) is the mean of all \( k \) simulation means:
	\[
	X = \frac{\sum_{i = 1}^{k} X_i}{k}.
	\]
	By the properties of variances of independent random variables, the variance of \( X \) is:
	\[
	\text{Var}(X) = \frac{\sum_{i = 1}^{k} \text{Var}(X_i)}{k^2}.
	\]
	Since each \( X_i \) has a variance:
	\[
	\text{Var}(X_i) = \frac{p(1 - p)}{n},
	\]
	the variance of the overall mean is:
	\[
	\text{Var}(X) = \frac{k \cdot \frac{p(1 - p)}{n}}{k^2} = \frac{p(1 - p)}{kn}.
	\]
	Note that to reduce complexity across simulations, we assumed that n
	n (the number of available predictions) remains constant. In cases where an experiment yields different values of n for each simulation, we calculate and use their average.\\
	3. \textbf{Distribution of Aggregate Mean \( X \):}
	By the CLT and the above calculation, the aggregate mean \( X \) also approximates a normal distribution:
	\[
	X \sim N\left(p, \frac{p(1 - p)}{kn}\right).
	\]
	
\end{proof}

The estimated accuracy of the classifier (denoted as \(\hat{p}\)) is given by:
\[
\hat{p} = \frac{\sum_{i = 1}^{k} \sum_{j = 1}^{n} x_{i,j}}{kn},
\]
where:
\begin{itemize}
	\item \( k \) is the number of simulations.
	\item \( n \) is the average number of instances per simulation.
	\item \( x_{i,j} \) is a binary variable representing the correctness of the \( j \)-th prediction in the \( i \)-th simulation.
\end{itemize} 

Now assume that we have two classifiers with accuracies \( p \) and \( q \), respectively, where each classifier is tested on \( n_1 \) and \( n_2 \) instances respectively performing \( k_1 \) and \( k_2 \) simulations. We would like to test whether the new model is better than the old one. We now show the steps required for our hypothesis test.

\begin{itemize}
	\item \textbf{Step 1: Define the Hypotheses}
	\begin{itemize}
		\item \textbf{Null Hypothesis (H\textsubscript{0}):} The new model is not better than the old model:
		\[
		H_0: \hat{p} \leq \hat{q},
		\]
		where \(\hat{p}\) represents the mean accuracy of the new model and \(\hat{q}\) the mean accuracy of the old model.
		
		\item \textbf{Alternative Hypothesis (H\textsubscript{1}):} The new model is better than the old model:
		\[
		H_1: \hat{p} > \hat{q}.
		\]
	\end{itemize}
	
	\item \textbf{Step 2: Calculate the Observed Difference}
	Compute the observed difference in mean accuracies between the two models:
	\[
	d = \hat{p} - \hat{q}.
	\]
	
	\item \textbf{Step 3: Compute the Standard Error of the Difference}
	Given the variances of both models and the correlation coefficient \( \rho \), calculate the standard error (SE) of the difference:
	\[
	\text{SE}(X - Y) = \sqrt{\frac{\hat{p}(1 - \hat{p})}{k_1n_1} + \frac{\hat{q}(1 - \hat{q})}{k_2n_2} - 2\rho \sqrt{\frac{\hat{p}(1 - \hat{p})}{k_1n_1} \cdot \frac{\hat{q}(1 - \hat{q})}{k_2n_2}}}.
	\]
	
	\item \textbf{Step 4: Compute the Test Statistic}
	Calculate the test statistic (\( Z \)) using the observed difference and the standard error:
	\[
	Z = \frac{d}{\text{SE}(X - Y)}.
	\]
	
	\item \textbf{Step 5: Compare Against the Critical Value}
	Determine the critical value based on the desired confidence level:
	\begin{itemize}
		\item For a one-tailed test at a 95\% confidence level, the critical value is approximately \( Z_{0.95} = 1.645 \).
		\item If \( Z \geq 1.645 \), reject the null hypothesis and conclude that the new model is better than the old model.
		\item Otherwise, fail to reject the null hypothesis.
	\end{itemize}
\end{itemize}

\subsection{Evaluation of Betting Function Performance}

This subsection details our experiments across four datasets: STAGGER, SEA, ELEC, and AIRLINES. We utilized a simple ICM and a 10-ICM ensemble for our tests. For the simple ICM, the training set sizes were configured to 200, 1000, 300, and 200 instances, respectively, as aligned with our prior research \cite{pmlr-v179-eliades22a}. We abbreviate our betting functions as follows: IH (InterHist), MIH (MultiInterHist), MINN (MultiInterHist-NN), CAU (Cautious). CAU, employing a 15-bin histogram, is used in both the single classifier case as detailed in \cite{pmlr-v179-eliades22a} and in an ensemble setting as detailed in \cite{pmlr-v204-eliades23a}."

Table \ref{tab:acc_1cl} presents the accuracy of a single classifier employing different betting functions across various datasets,  noise levels and different values of $r$. Notably, the MIHNN, which integrates three interpolated histograms with three nearest neighbours, frequently surpasses or matches the performance of other functions. Although the performance of 3 nearest neighbours alone is not presented, it is essential to note that this method generally lowers performance and often fails to detect many changes. However, when it does detect a change, it does so with exceptional speed.

\begin{table}[h]
	\caption{Single Classifier Performance\label{tab:acc_1cl}. }
	
	\begin{tabular}{|l|l|l|l|l|l|l|}
		\hline
		\textbf{Dataset} & \textbf{Noise \%} & \textbf{r } & \textbf{IH} & \textbf{MIH} & \textbf{MIHNN}&\textbf{CAU} \\
		\hline
		STAGGER & 0 & 100 & 0.996 & 0.996 & 0.999 & 0.996 \\
		STAGGER & 0 & 10 & 0.996 & 0.996 & 0.999&- \\
		STAGGER & 0 & 2 & 0.997 & 0.997 & 0.999&- \\
		STAGGER & 10 & 100 & 0.947 & 0.947 & 0.947&0.944 \\
		STAGGER & 10 & 10 & 0.948 & 0.948 & 0.948&- \\
		STAGGER & 10 & 2 & 0.948 & 0.948 & 0.949&- \\
		\hline
		SEA & 0 & 100 & 0.982 & 0.982 & 0.981&0.982 \\
		SEA & 0 & 10 & 0.982 & 0.981 & 0.982&- \\
		SEA & 0 & 2 & 0.982 & 0.982 & 0.981&- \\
		SEA & 10 & 100 & 0.914 & 0.914 & 0.914&0.914 \\
		SEA & 10 & 10 & 0.914 & 0.914 & 0.914&- \\
		SEA & 10 & 2 & 0.914 & 0.914 & 0.913&- \\
		\hline
		ELEC & unknown & 100 & 0.751 & 0.753 & 0.762&0.748 \\
		ELEC & unknown & 10 & 0.758 & 0.758 & 0.758&- \\
		ELEC & unknown & 2 & 0.749 & 0.755 & 0.761&- \\
		\hline
		AIRLINES & unknown & 100 & 0.605 & 0.599 & 0.618&0.591 \\
		AIRLINES & unknown & 10 & 0.605 & 0.602 & 0.616&- \\
		AIRLINES & unknown & 2 & 0.607 & 0.602 & 0.617&- \\
		\hline
	\end{tabular}

\end{table}

Comparing IH with CAU reveals an improvement in accuracy, especially in real-world datasets, where substituting a 15-bin histogram with a 15-bin interpolated histogram in the Cautious function improves performance. In synthetic datasets like STAGGER and SEA, performance remains consistent. Comparing IH with MIH, we observe similar performances in synthetic datasets, regardless of the noise level. However, MIH slightly outperforms IH in real-world datasets such as ELEC and AIRLINES. MIHNN outperforms all other functions across all datasets, except in SEA, where there is no notable improvement and, in some cases, a reduction in accuracy.

In noise-free datasets, accuracy approaches the theoretical ideal of $1.0$, particularly in the STAGGER dataset. With a noise level of $10\%$, the accuracy closely aligns with the theoretical expectation of $0.95$. Below, we present the accuracy data for a single classifier with various betting functions across multiple datasets and noise levels. 

A decrease in $r$ should theoretically be beneficial in synthetic datasets because we begin taking observations before the drift is detected. This provides more instances available for making predictions that belong to the same distribution until a change occurs, thus improving accuracy because the ratio of instances that belong to the distribution that the model is trained on to the total number of instances until a new concept drift is detected is higher. However, this trend is observable on the STAGGER dataset in IH and MIH because MIHNN detects drift quickly. In the SEA dataset, it is very difficult to observe this conclusion because we only have 3 concept drifts in contrast to the STAGGER dataset, which has 99.

As $r$ decreases in the real-world datasets, such as the ELEC dataset, from 100 to 10, we observe an increase in accuracy for all betting functions. However, if we further decrease $r$ from 10 to 2, there is a decrease in accuracy. This is not surprising because, in contrast to synthetic datasets, real-world datasets may exhibit incremental or gradual drift until the distribution switches completely. Consequently, our model is trained on instances that differ from the final distribution, thus reducing accuracy.

In the case of the AIRLINES dataset, a decrease in $r$ from 100 to 10 improves accuracy for the IH and MIH functions, while for the MIHNN function, we observe a decrease. This is because MIHNN detects concept drift quickly and includes observations in the training set that do not belong to the final distribution. If we further reduce $r$ from 10 to 2 for the MIHNN function, there is no change in accuracy, while for the MIH function, accuracy slightly improves but remains worse than the accuracy obtained for $r=100$.

	Building on these insights, we now expand our analysis to an ICM ensemble of 10 classifiers, offering a comparative perspective against the single-classifier configuration discussed here.

 Table \ref{tab:Extended1A}, provides an  overview of our experimental results as above specifically we show  the performance levels attained by the ICM ensemble using various betting functions  and demonstrates how this performance fluctuates with different configurations of the $r$ parameter.

\begin{table}[h]
	\caption{ICM Ensemble Performance\label{tab:Extended1A}.}
	
	\begin{tabular}{|l|l|l|l|l|l|l|}
		\hline
		\textbf{Dataset} & \textbf{Noise \%} & \textbf{r } & \textbf{IH} & \textbf{MIH} & \textbf{MIHNN}&\textbf{CAU} \\
		\hline
		STAGGER & 0 & 100 & 0.995 & 0.996 & 0.999 & 0.995 \\
		STAGGER & 0 & 10 & 0.995 & 0.995 & 0.999 & 0.995 \\
		STAGGER & 0 & 2 & 0.996 & 0.998 & 0.999 & 0.996\\
		STAGGER & 10 & 100 & 0.948 & 0.947 & 0.949 & 0.947 \\
		STAGGER & 10 & 10 & 0.948 & 0.948 & 0.949 & 0.948\\
		STAGGER & 10 & 2 & 0.948 & 0.948 & 0.949 & 0.948\\
		\hline
		SEA & 0 & 100 & 0.981 & 0.981 & 0.982 & 0.983 \\
		SEA & 0 & 10 & 0.982 & 0.981 & 0.982 & 0.982\\
		SEA & 0 & 2 & 0.982 & 0.980 & 0.981 & 0.982\\
		SEA & 10 & 100 & 0.916 & 0.917 & 0.920 & 0.917 \\
		SEA & 10 & 10 & 0.916 & 0.916 & 0.920 & 0.915\\
		SEA & 10 & 2 & 0.916 & 0.915 & 0.920 & 0.916\\
		\hline
		ELEC & unknown & 100 & 0.765 & 0.766 & 0.766 & 0.765\\
		ELEC & unknown & 10 & 0.768 & 0.768 & 0.765 & 0.766\\
		ELEC & unknown & 2 & 0.767 & 0.766 & 0.767 & 0.767\\
		\hline
		AIRLINES & unknown & 100 & 0.649 & 0.651 & 0.648 & 0.635\\
		AIRLINES & unknown & 10 & 0.649 & 0.651 & 0.649 & 0.637\\
		AIRLINES & unknown & 2 & 0.649 & 0.651 & 0.648 & 0.637\\
		\hline
	\end{tabular}
\end{table}

While observing Table \ref{tab:Extended1A} and comparing IH with MIH, we observe similar performances in synthetic datasets. In the STAGGER dataset, MIH either ties with or outperforms IH in most cases, except for one instance. In the SEA dataset, results are mixed. For the ELEC dataset, IH and MIH perform similarly. However, in the AIRLINES dataset, MIH performs better than IH.

When we compare MIHNN with IH and MIH, we see that in the STAGGER dataset, MIHNN dominates and approximates the theoretical expectation of accuracy. In the SEA dataset without noise, the results are mixed, but when noise is injected, MIHNN performs better. For the ELEC dataset, MIHNN performs similarly to the other two betting functions. In the AIRLINES dataset, MIHNN performs worse than MIH and similarly to IH.

A reduction in the $r$ value does not seem to play a significant role in accuracy improvement due to the use of an ensemble. However, small changes can be seen when using the MIH betting function in the STAGGER dataset with noise, where a decrease in $r$ improves accuracy. In the ELEC dataset, setting $r$ to 10 improves performance when using IH and MIH, while MIHNN performance slightly reduces. Further reducing $r$ from 10 to 2 results in decreased performance for IH and MIH, while MIHNN performance improves.

For the AIRLINES dataset, a reduction of $r$ from 100 to 10 benefits IH and MIHNN, while MIH is not affected. A further decrease from 10 to 2 does not affect IH but slightly reduces MIHNN accuracy.

Because in Table \ref{tab:Extended1A}, the differences in accuracies appear minimal or are on the boundary for one betting function to be definitively better than another, we have performed a hypothesis test on these accuracies as described in subsection \ref{HT}. We assumed a perfect negative correlation of \(-1\) to make it more demanding to reject the null hypothesis. This assumption helps us avoid false positives. The results of these tests using r=10 are presented in Table \ref{tab:HT}, where \(p_1\), \(p_2\), \(p_3\) and \(p_4\) represent the accuracies of IH, MIH, MIHNN and CAU respectively. Note that we used precision with up to 15 decimal points while performing these hypothesis tests.
	
\begin{table}[h]
	\centering
	\caption{Summary of Hypothesis Test Results Across Datasets}
	\label{tab:HT}
	\begin{tabular}{|c|c|c|c|c|}
		\hline
		\textbf{  } & \textbf{STAGGER} & \textbf{SEA} & \textbf{ELEC} & \textbf{AIRLINES} \\
		\textbf{  } & \textbf{10\% noise} & \textbf{10\% noise} & \textbf{ } & \textbf{ }\\
		\hline
		
		\(\mathbf{H_0 : p_1 \leq p_2}\) & p = 41.3334\% & p = 99.7434\% & p = 64.4024\% & p = 99.9498\% \\
		\(\mathbf{H_1 : p_1 > p_2}\) & Z = 0.218977 & Z = -2.798629 & Z = -0.369236 & Z = -3.289466 \\
		\hline
		\(\mathbf{H_0 : p_1 \leq p_3}\) & p = 100.0000\% & p = 100.0000\% & p = 5.2861\% & p = 44.0022\% \\
		\(\mathbf{H_1 : p_1 > p_3}\) & Z = -8.448121 & Z = -17.539236 & Z = 1.617728 & Z = 0.150913 \\
		\hline
		\(\mathbf{H_0 : p_1 \leq p_4}\) & p=59.7875\% & p=7.3610\% & p=23.1987\% & \textbf{p=0.0000\%} \\
		\(\mathbf{H_1 : p_1 > p_4}\) & Z = -0.247851 & Z = 1.449419 & Z = 0.732319 & \textbf{Z = 20.871787} \\
		\hline
		
		\(\mathbf{H_0 : p_2 \leq p_1}\) & p = 58.6666\% & \textbf{p = 0.2566\%} & p = 35.5976\% & \textbf{p = 0.0502\%} \\
		\(\mathbf{H_1 : p_2 > p_1}\) & Z = -0.218977 & \textbf{Z = 2.798629} & Z = 0.369236 & \textbf{Z = 3.289466} \\
		\hline
		\(\mathbf{H_0 : p_2 \leq p_3}\) & p = 100.0000\% & p = 100.0000\% & \textbf{p = 2.3578\%} & \textbf{p = 0.0290\%} \\
		\(\mathbf{H_1 : p_2 > p_3}\) & Z = -8.666700 & Z = -14.740450 & \textbf{Z = 1.984893} & \textbf{Z = 3.440387} \\
		\hline
		\(\mathbf{H_0 : p_2 \leq p_4}\) & p=67.9686\% & \textbf{p=0.0011\%} & p=10.4849\% & \textbf{p=0.0000\%} \\
		\(\mathbf{H_1 : p_2 > p_4}\) & Z = -0.466820 & \textbf{Z = 4.247979} & Z = 1.254399 & \textbf{Z = 24.161922} \\
		\hline
		
		\(\mathbf{H_0 : p_3 \leq p_1}\) & \textbf{p = 0.0000\%} & \textbf{p = 0.0000\%} & p = 94.7139\% & p = 55.9978\% \\
		\(\mathbf{H_1 : p_3 > p_1}\) & \textbf{Z = 8.448121} & \textbf{Z = 17.539236} & Z = -1.617728 & Z = -0.150913 \\
		\hline
		\(\mathbf{H_0 : p_3 \leq p_2}\) & \textbf{p = 0.0000\%} & \textbf{p = 0.0000\%} & p = 97.6422\% & p = 99.9710\% \\
		\(\mathbf{H_1 : p_3 > p_2}\) & \textbf{Z = 8.666700} & \textbf{Z = 14.740450} & Z = -1.984893 & Z = -3.440387 \\
		\hline
		\(\mathbf{H_0 : p_3 \leq p_4}\) & \textbf{p=0.0000\%} & \textbf{p=0.0000\%} & p=94.0518\% & \textbf{p=0.0000\%} \\
		\(\mathbf{H_1 : p_3 > p_4}\) & \textbf{Z = 8.200416} & \textbf{Z = 26.851278} & Z = -1.559140 & \textbf{Z = 29.303565} \\
		\hline
		
		\(\mathbf{H_0 : p_4 \leq p_1}\) & p=40.2125\% & p=92.6390\% & p=69.7711\% & p=100.0000\% \\
		\(\mathbf{H_1 : p_4 > p_1}\) & Z = 0.247851 & Z = -1.449419 & Z = -0.517828 & Z = -20.871787 \\ \hline
		\(\mathbf{H_0 : p_4 \leq p_2}\) & p=32.0314\% & p=99.9989\% & p=81.2459\% & p=100.0000\% \\
		\(\mathbf{H_1 : p_4 > p_2}\) & Z = 0.466820 & Z = -4.247979 & Z = -0.886994 & Z = -24.161922 \\ \hline
		\(\mathbf{H_0 : p_4 \leq p_3}\) & p=100.0000\% & p=100.0000\% &  p=13.5123\% & p=100.0000\% \\
		\(\mathbf{H_1 : p_4 > p_3}\) & Z = -8.200416 & Z = -18.988307 & Z = 1.102497 & Z = -20.720896 \\ \hline

	\end{tabular}
\end{table}

The results in Table \ref{tab:HT} show that for the STAGGER dataset, the best-performing betting function is MIHNN. Specifically, when compared to IH, MIH, and CAU, the hypothesis is rejected with a p-value that approximates 0. However, there is no statistical evidence to conclude which betting function is better among IH, MIH, and CAU. In the SEA dataset, the betting function with higher accuracy is again MIHNN, because hypothesis tests against the other three betting functions reject the null hypothesis with a p-value approximating 0. We also found statistical evidence that MIH is preferable to IH, with a very low p-value of 0.26\%. Simulations on the ELEC dataset reveal statistical evidence that MIH has higher accuracy than MIHNN, because the null hypothesis is rejected with a p-value of 2.36\%. However, there is no statistical evidence to determine which betting function performs better. Finally, for the AIRLINES dataset, when we compare MIH with MIHNN, IH, and CAU, the null hypothesis is rejected with a significance level of less than 1\%, indicating that the betting function with the best performance is MIH. We have also found evidence that CAU is outperformed by all betting functions with a significance level very close to 0.

\subsection{Ensembles Performance Analysis}

In this subsection, we show the accuracies and the number of non-available predictions using the MIHNN betting function with the $r$ parameter set to 10. Figures \ref{fig:stagger_subfigures}, \ref{fig:sea_subfigures}, \ref{fig:elec_subfigures}, and \ref{fig:airlines_subfigures} illustrate the accuracy achieved with various combinations of 1 to 10 classifiers, which can help us assess the optimal number of classifiers for maximal accuracy. Moreover, Figures \ref{fig:stagger_subfiguresnans}, \ref{fig:sea_subfiguresnans}, \ref{fig:elec_subfiguresnans}, and \ref{fig:airlines_subfiguresnans} show the frequency of non-available predictions corresponding to each classifier combination. Note that other configurations show a similar pattern regarding the accuracy range and the number of available predictions.

% Figure 3: Accuracy with 10% Noise
\begin{figure}[h]
	\centering
	\includegraphics[width=0.71\textwidth]{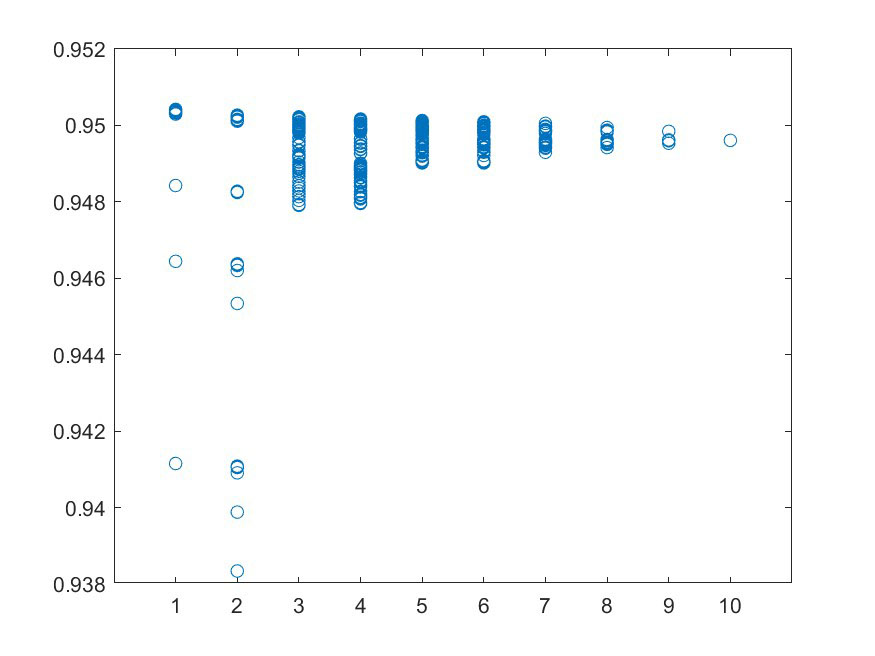}
	\caption{STAGGER dataset accuracy with R=10 and 10\% noise using the MIHNN betting function and all possible combinations of 10 classifiers.}
	\label{fig:stagger_subfigures}
\end{figure}

% Figure 4: Non-Available Predictions with 10% Noise
\begin{figure}[h]
	\centering
	\includegraphics[width=0.71\textwidth]{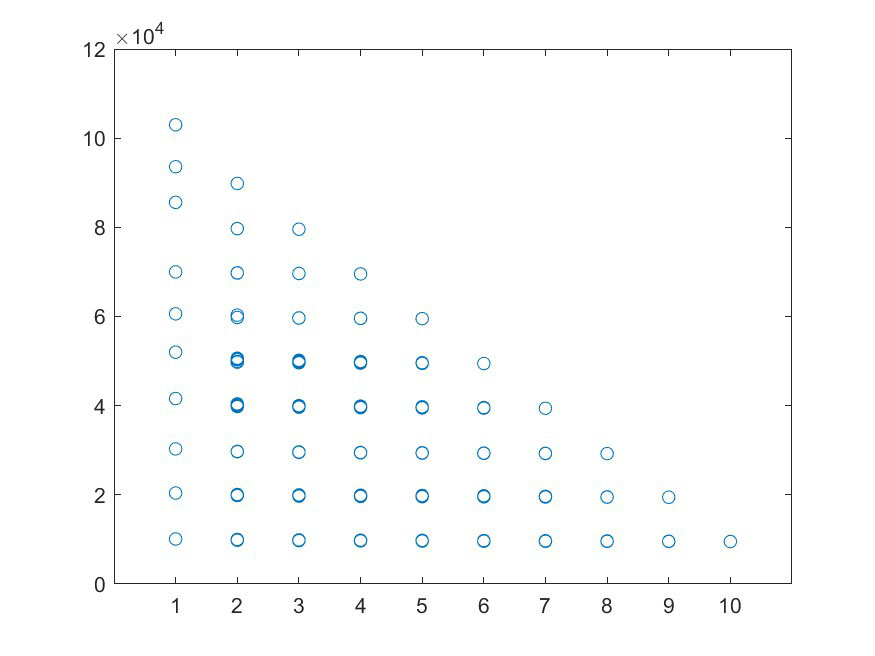}
	\caption{STAGGER dataset's number of non-available predictions with R=10 and 10\% noise using the MIHNN betting function and all possible combinations of 10 classifiers.}
	\label{fig:stagger_subfiguresnans}
\end{figure}

Figure \ref{fig:stagger_subfigures} shows that for the Stagger dataset, as the number of classifiers increases, the range of accuracies decreases. It is worth mentioning that in this dataset, there are ensemble configurations where using fewer than 10 classifiers results in higher accuracy. Additionally, Figure \ref{fig:stagger_subfiguresnans} demonstrates that many combinations of classifiers can achieve a small number of instances with no prediction. However, finding a combination of classifiers with high accuracy and simultaneously a small number of non-available predictions is a computationally expensive task. Thus, using an ensemble with 10 classifiers is a safe choice.

% Figure 3: SEA with 10% Noise, Cautious-Multi(InterHist-NN)
\begin{figure}[h]
	\centering
	\includegraphics[width=0.71\textwidth]{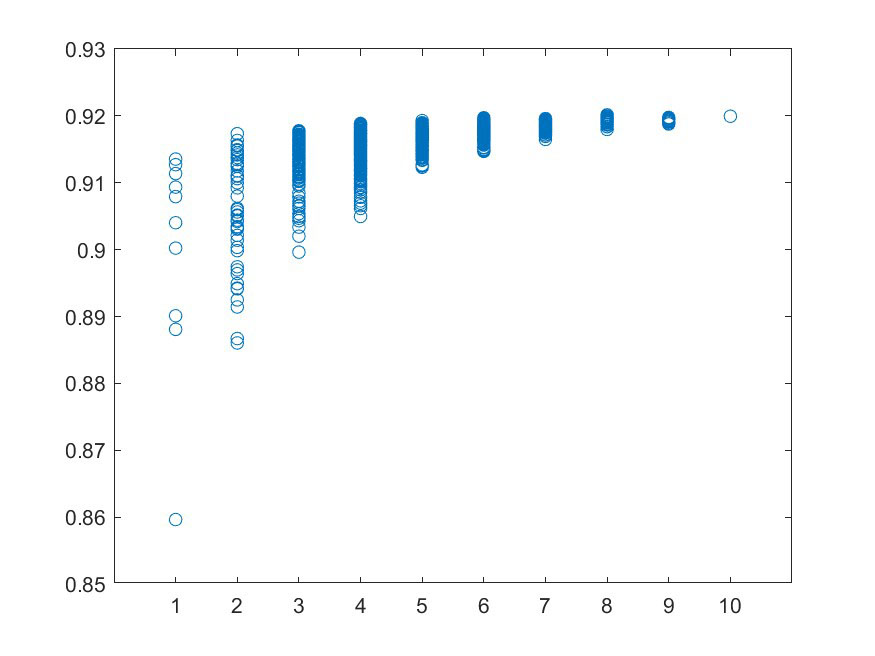}
	\caption{SEA dataset accuracy with R=10 and 10\% noise using the MIHNN betting function and all possible combinations of 10 classifiers.}
	\label{fig:sea_subfigures}
\end{figure}

% Figure 4: SEA with 10% Noise, Cautious-Multi(InterHist-NN)
\begin{figure}[h]
	\centering
	\includegraphics[width=0.71\textwidth]{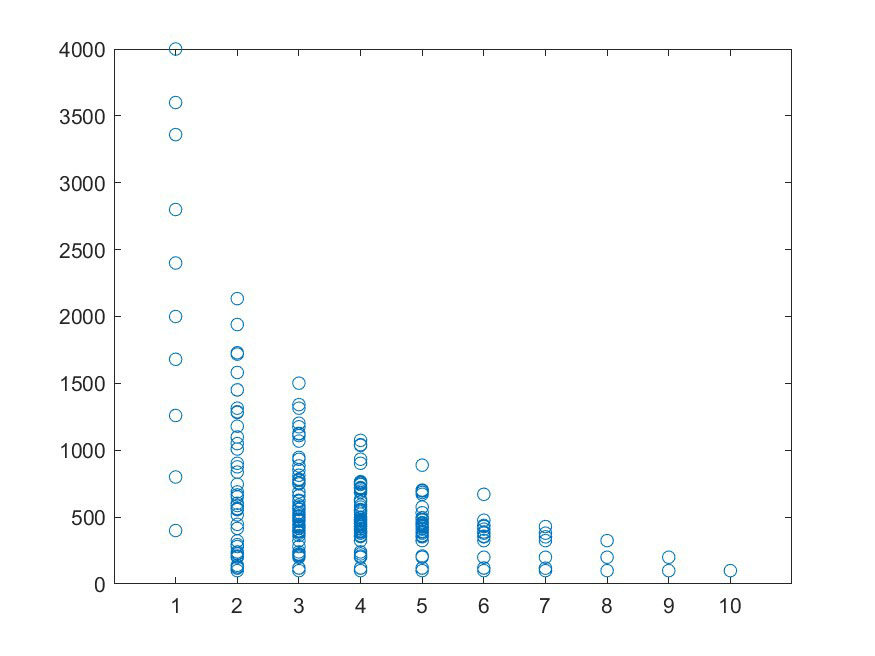}
	\caption{SEA dataset's number of non-available predictions with R=10 and 10\% noise using the MIHNN betting function and all possible combinations of 10 classifiers.}
	\label{fig:sea_subfiguresnans}
\end{figure}

Consistent with the findings above, Figure \ref{fig:sea_subfigures} shows that in the SEA dataset, as the number of classifiers increases, the range of accuracies decreases; however, we also observe an increasing trend in these accuracies. Again, in this dataset, there are configurations where using fewer than 10 classifiers results in higher accuracy. Similarly to the STAGGER dataset, Figure \ref{fig:sea_subfiguresnans} demonstrates that many combinations of classifiers can achieve a small number of instances with no predictions. Searching for the optimal combination of classifiers that provides high accuracy while simultaneously minimizing the number of non-available predictions is a computationally expensive task. Thus, once again, using an ensemble of 10 classifiers is a safe choice.

% Figure 1: ELEC with R=10, MULTI INTERHIST
\begin{figure}[h]
	\centering
	\includegraphics[width=0.71\textwidth]{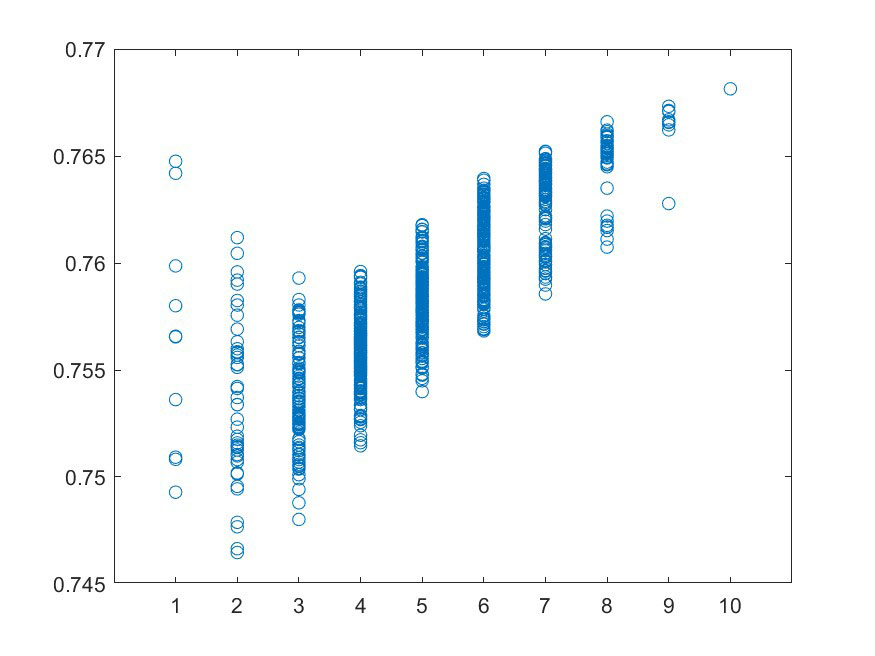}
	\caption{ELEC dataset accuracy with R=10 using the MIHNN betting function and all possible combinations of 10 classifiers.}
	\label{fig:elec_subfigures}
\end{figure}

% Figure 2: ELEC with R=10, MULTI INTERHIST (Non-Available Predictions)
\begin{figure}[h]
	\centering
	\includegraphics[width=0.71\textwidth]{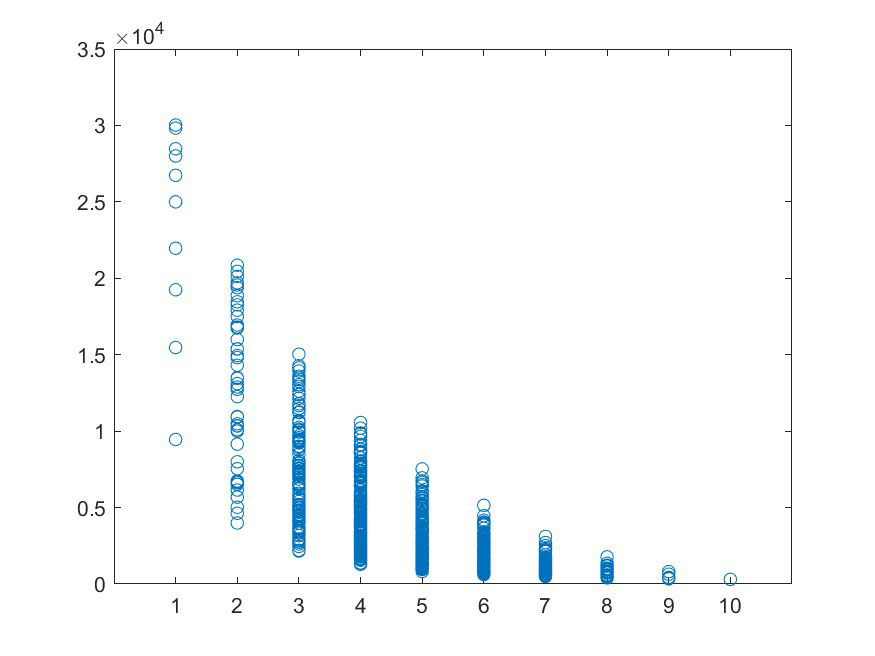}
	\caption{ELEC dataset's number of non-available predictions with R=10 using the MIHNN betting function and all possible combinations of 10 classifiers.}
	\label{fig:elec_subfiguresnans}
\end{figure}
For the ELEC dataset, using 10 classifiers consistently yields the best accuracy and the highest number of instances with available predictions. This is demonstrated in Figure \ref{fig:elec_subfiguresnans}, where ensembles with 10 classifiers provide more instances with available predictions than configurations with fewer classifiers. Additionally, as shown in Figure \ref{fig:elec_subfigures}, configurations with fewer classifiers result in fewer available predictions and lower accuracy. Therefore, using an ensemble of ten classifiers is preferable.

	% Figure 1: AIRLINES with R=10, MULTI INTERHIST
\begin{figure}[h]
	\centering
	\includegraphics[width=0.71\textwidth]{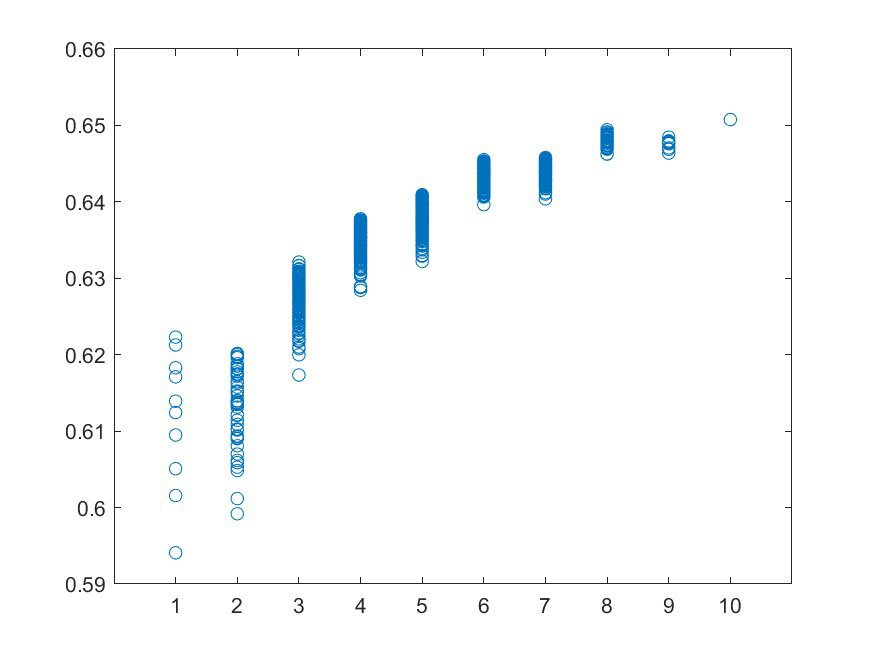}
	\caption{AIRLINES dataset accuracy with R=10 using the MIHNN betting function and all possible combinations of 10 classifiers.}
	\label{fig:airlines_subfigures}
\end{figure}

% Figure 2: AIRLINES with R=10, MULTI INTERHIST (Non-Available Predictions)
\begin{figure}[hp]
	\centering
	\includegraphics[width=0.71\textwidth]{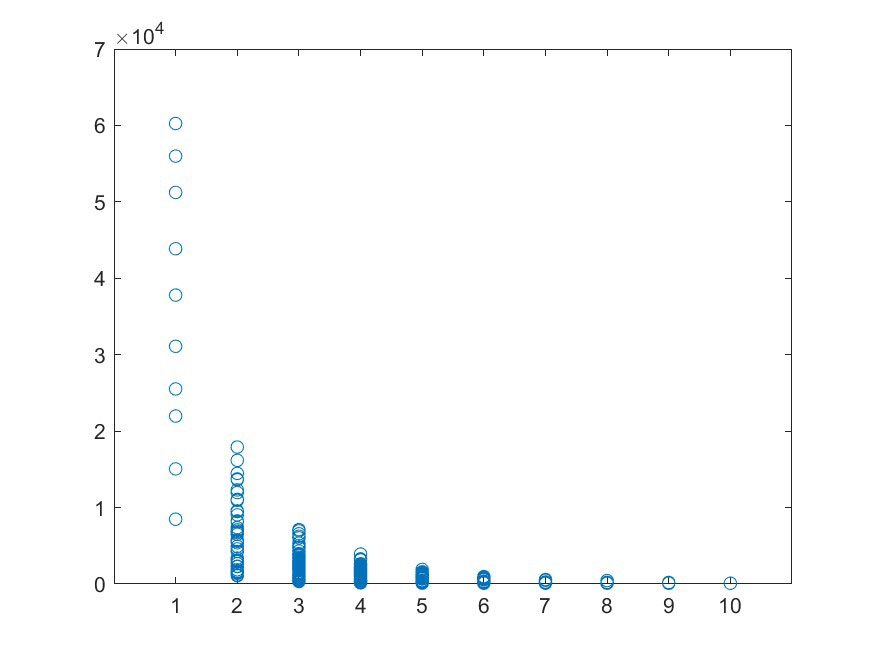}
	\caption{AIRLINES dataset's number of non-available predictions with R=10 using the MIHNN betting function and all possible combinations of 10 classifiers.}
	\label{fig:airlines_subfiguresnans}
\end{figure}

Figure \ref{fig:airlines_subfigures} indicates that for the AIRLINES dataset, the highest level of accuracy is achieved with an ensemble of 10 classifiers. Additionally, Figure \ref{fig:airlines_subfiguresnans} shows that the number of non-available predictions is the lowest when employing 10 classifiers.

The conclusion here, regarding all tested datasets, is that using an ICM ensemble consisting of 10 classifiers is a safe choice because finding the optimal combination of classifiers is a computationally expensive task. Using an ensemble of 10 ICMs allows us to maintain a low number of unavailable predictions while achieving, in most cases, performance near the optimum value.

\subsection{Discussion}

In this section, we assess the performance of our new model in comparison to the most effective models identified in our prior research, as detailed in references \cite{pmlr-v179-eliades22a} and \cite{pmlr-v204-eliades23a}, alongside several leading state-of-the-art methods. The abbreviations used herein are as follows: ICM-E* denotes ICM Ensemble* from this study, ICM-E refers to ICM Ensemble from \cite{pmlr-v204-eliades23a}, CAU stands for CAUTIOUS as per \cite{pmlr-v179-eliades22a}, AWE represents Accuracy Weighted Ensemble, DWM-NB denotes Dynamic Weighted Majority with Naive Bayes, and ARFHT is  Adaptive Random Forest with Hoeffding Tree.

Table \ref{tab:4} presents comparative results, highlighting the accuracies obtained by the optimal betting function from our recent simulations (ICM ensemble* column), three state-of-the-art algorithms referenced in \cite{Classification_comparisons}, and the most successful methodologies from our previous studies. For the STAGGER dataset, the accuracy of ICM-E* approximates the theoretical expectation for a dataset with $10\%$ noise, thereby outperforming CAU and AWE, and matching the results of ICM-E and ARFHT. In the case of the SEA dataset, also with $10\%$ noise, our new methodology surpasses the state-of-the-art and improves upon our previous results. Observing the accuracies for the ELEC dataset, it is evident that ICM-E* shows slight improvements over ICM-E, exceeds the performance of CAU, and surpasses all state-of-the-art algorithms. When comparing results for the AIRLINES dataset, ICM-E* outperforms all  methodologies, with the exception of ARFHT.

\begin{table}[ht]
	\caption{Comparison of our Icm Ensemble*  with state of the art methods and our previous studies}\label{tab:4}
	\begin{tabular}{|l|l|l|l|l|l|l|}
		\hline
		\textbf{Dataset} & \textbf{ICM-E* } & \textbf{ICM-E } & \textbf{CAU} & \textbf{AWE} & \textbf{DWE-NB} & \textbf{ARFHT} \\
		
		\hline
		STAGGER & 0.949 & 0.949 & 0.946 & 0.948 & 0.901 &0.949 \\
		SEA & 0.920 & 0.917 & 0.915 & 0.879 & 0.876 &0.841 \\
		ELEC & 0.768 & 0.767 & 0.759 & 0.756 & 0.800 & 0.857\\
		AIRLINES & 0.651 & 0.640 & 0.602 & 0.618 & 0.640 &0.666 \\
		\hline
	\end{tabular}
\end{table}

To assess whether our methodology demonstrates improvements over previous studies, especially when accuracy differences are minimal, hypothesis tests were conducted as discussed in Section \ref{HT} and presented in Table \ref{tab:HypothesisTestResults}. Here, \(p_1\), \(p_2\), \(p_3\), and \(p_4\) represent the accuracies of ICM-E*, ICM-E, CAU, AWE, DWE-NB, and ARFHT respectively. We assumed a perfect negative correlation \((-1)\) between the accuracies of the new and previous models to make hypothesis rejection more challenging. Additionally, we considered the number of instances tested,  for the STAGGER dataset, \cite{Classification_comparisons} tested on 100,000 instances. Furthermore our study involved five simulations, while \cite{Classification_comparisons} conducted only one. Rounding in reported accuracies was also considered; for instance, an accuracy reported as $0.948$ was treated as $0.9485$, while our methodologies used $15$ decimal places. The hypothesis testing framework was established with \(H_0: p_1 \leq p_2\) versus \(H_1: p_1 > p_2\), where \(p_1\) represents the accuracy of the new model and \(p_2\) represents the accuracy of previous models.

Hypothesis tests for the STAGGER dataset indicate that ICM-E* is better than CAU and DWE-NB, as it rejects the null hypothesis with a p-value close to $0$. However, there is insufficient statistical evidence to differentiate it significantly from other models, as their performance closely aligns with the theoretical expected accuracy. In the case of the SEA dataset, our methodology clearly surpasses all others, rejecting the null hypothesis with a p-value near $0$. For the ELEC dataset, there is significant statistical evidence that our methodology outperforms CAU and AWE. As for the AIRLINES dataset, our approach dominates all but the ARFHT approach.

\begin{table}[h]
	\centering
	\caption{Summary of Hypothesis Test Results Across Datasets}
	\label{tab:HypothesisTestResults}
	\begin{tabular}{|c|c|c|c|c|}
		\hline
		\textbf{} & \textbf{STAGGER} & \textbf{SEA} & \textbf{ELEC} & \textbf{AIRLINES} \\
		\hline
		\(\mathbf{H_0 : p_1 \leq p_2}\) & p = 36.5987\% & \textbf{p = 0.0000\%} & p = 41.2185\% & \textbf{p = 0.0000\%} \\
		\(\mathbf{H_1 : p_1 > p_2}\) & Z = 0.342502 & \textbf{Z = 6.172890} & Z = 0.221927 & \textbf{Z = 10.871861} \\
		\hline
		\(\mathbf{H_0 : p_1 \leq p_3}\) & \textbf{p = 0.0062\%} & \textbf{p = 0.0000\%} & \textbf{p = 0.1466\%} & \textbf{p = 0.0000\%} \\
		\(\mathbf{H_1 : p_1 > p_3}\) & \textbf{Z = 3.838780} & \textbf{Z = 11.132776} & \textbf{Z = 2.974784} & \textbf{Z = 50.446432} \\
		\hline
		\(\mathbf{H_0 : p_1 \leq p_4}\) & p = 8.2092\% & \textbf{p = 0.0000\%} & \textbf{p = 0.0031\%} & \textbf{p = 0.0000\%} \\
		\(\mathbf{H_1 : p_1 > p_4}\) & Z = 1.391136 & \textbf{Z = 90.504270} & \textbf{Z = 4.005427} & \textbf{Z = 33.896615} \\
		\hline
		\(\mathbf{H_0 : p_1 \leq p_5}\) & \textbf{p = 0.0000\%} & \textbf{p = 0.0000\%} & p = 100.0000\% & \textbf{p = 0.0000\%} \\
		\(\mathbf{H_1 : p_1 > p_5}\) & \textbf{Z = 46.230986} & \textbf{Z = 96.469046} & Z = -11.695315 & \textbf{Z = 10.872559} \\
		\hline
		\(\mathbf{H_0 : p_1 \leq p_6}\) & p = 44.5169\% & \textbf{p = 0.0000\%} & p = 100.0000\% & p = 100.0000\% \\
		\(\mathbf{H_1 : p_1 > p_6}\) & Z = 0.137876 & \textbf{Z = 161.223215} & Z = -35.295378 & Z = -16.883154 \\
		\hline
	\end{tabular}
\end{table}

\section{Conclusion}\label{sec:concl}

In this study, we introduce novel betting functions and assess their behavior in both an ICM ensemble and a single ICM setup. These new betting functions seem to enhance accuracy for both the ICM ensemble and single ICM configurations. Typically, optimal accuracy is achieved using an ICM ensemble of ten classifiers in conjunction with the Cautious Betting function, particularly when multiple density estimators are employed. The application of the Cautious Betting function with these estimators shows fast CD detection, thereby improving overall accuracy. Moreover, by setting the number of classifiers in the ensemble to ten, we not only achieve high accuracy but also maintain a low rate of unavailable predictions. This setup avoids the computationally intensive task of searching for the optimal combination of classifiers. Future research will aim to refine this model on other datasets and extend its adaptability to a more comprehensive range of streaming data scenarios, enhancing its robustness and predictive capabilities.

\backmatter

\clearpage

\section*{Declarations}

\subsection*{Funding}
Not Applicable

\subsection*{Conflicts of Interest/Competing Interests}
Not Applicable

\subsection*{Ethics Approval}
Not Applicable

\subsection*{Consent to Participate}
Not Applicable

\subsection*{Consent for Publication} 
Not Applicable

\subsection*{Availability of Data and Material}
The synthetic datasets used in this study, namely STAGGER and SEA, were generated using the MOA (Massive Online Analysis) framework. The MOA software is openly accessible at \url{https://github.com/Waikato/moa/releases}.

Regarding the real-world datasets, the Elec dataset can be found at \url{https://www.openml.org/search?type=data&sort=runs&id=151&status=active}, and the Airlines dataset is available at \url{https://www.openml.org/search?type=data&status=active&qualities.NumberOfInstances=between_100000_1000000&id=1169}.

\subsection*{Code Availability}
The custom code developed during this study is available upon request.

\subsection*{Authors' Contributions}
Both authors contributed equally to this work.

\clearpage
\bibliography{sn-bibliography}% common bib file
%% if required, the content of .bbl file can be included here once bbl is generated
%%\input sn-article.bbl

\end{document}